\documentclass{article}
\usepackage{microtype}
\usepackage{graphicx}
\usepackage{subcaption}
\usepackage{booktabs} 
\usepackage{xcolor}
\usepackage{stfloats}
\usepackage{hyperref}

\definecolor{auburn}{rgb}{0.43, 0.21, 0.1}

\usepackage[accepted]{icml2026}

\usepackage{amsmath}
\usepackage{amssymb}
\usepackage{mathtools}
\usepackage{amsthm}

\usepackage{filecontents}

\usepackage{imakeidx}
\makeindex[intoc, columns=2, title=Index, options=-s indexstyle.ist]
\usepackage[capitalize,noabbrev]{cleveref}

\theoremstyle{plain}
\newtheorem{theorem}{Theorem}[section]
\newtheorem{proposition}[theorem]{Proposition}
\newtheorem{lemma}[theorem]{Lemma}

\theoremstyle{definition}

\newtheorem{assumption}[theorem]{Assumption}
\theoremstyle{remark}
\newtheorem{remark}[theorem]{Remark}

\usepackage{algorithm}
\usepackage{xcolor} 
\renewcommand{\algorithmicrequire}
{\textbf{Input:}}
\renewcommand{\algorithmicensure}{\textbf{Output:}}
\newcommand{\RETURN}{\STATE \textbf{return}~}

\usepackage[disable,textsize=tiny]{todonotes}

\icmltitlerunning{Data-Free Preconditioning for Private Deep Learning}

\begin{document}

\twocolumn[
  \icmltitle{DP-KFC: Data-Free Preconditioning for Privacy-Preserving Deep Learning}



  \icmlsetsymbol{equal}{*}

  \begin{icmlauthorlist}
    \icmlauthor{Marc Molina Van den Bosch}{cern,upf}
    \icmlauthor{Riccardo Taiello}{cern}
    \icmlauthor{Albert Sund Aillet}{cern}
    \icmlauthor{Andrea Protani}{cern,upf}
    \icmlauthor{Miguel Angel Gonzalez Ballester}{upf,icrea}
    \icmlauthor{Luigi Serio}{cern}
  \end{icmlauthorlist}

  \icmlaffiliation{cern}{CERN, Geneva, Switzerland}
  \icmlaffiliation{upf}{BCN MedTech, Department of Engineering, Universitat Pompeu Fabra, Barcelona, Spain}
  \icmlaffiliation{icrea}{ICREA, Barcelona, Spain}

  \icmlcorrespondingauthor{Marc Molina Van den Bosch}{marc.molina.van.den.bosch@cern.ch}

  \icmlkeywords{Machine Learning, ICML}

  \vskip 0.3in
]



\printAffiliationsAndNotice{}  

\begin{abstract} 
Differentially private optimization suffers from a fundamental geometric mismatch: deep networks have highly anisotropic loss landscapes, yet DP-SGD injects isotropic noise. Second-order preconditioning can resolve this, but estimating curvature typically requires private data (consuming privacy budget) or public data (introducing distribution shift). We show that the Fisher Information Matrix decouples into \textit{architectural sensitivity}, recoverable via synthetic noise, and \textit{input correlations}, approximable from modality-specific frequency statistics. We propose DP-KFC, which constructs KFAC preconditioners by probing networks with structured synthetic noise, requiring neither private nor public data. Empirically, DP-KFC consistently outperforms DP-SGD and adaptive baselines across diverse modalities in strong privacy regimes ($\varepsilon \leq 3$). DP-KFC matches private-data preconditioners while public-data variants degrade by up to $4.8\%$, showing that curvature can be estimated without consuming privacy budget or introducing distribution shift. This enables privacy-preserving learning in specialized domains (e.g., medical applications) where regulatory constraints make data scarce.
\end{abstract}

\section{Introduction}
\label{sec:intro}
Differential privacy has emerged as the gold standard for protecting sensitive data in machine learning, yet its practical adoption remains limited by a fundamental geometric conflict \cite{Abadi_2016}. Modern neural networks exhibit highly ill-conditioned loss landscapes where eigenvalues of the curvature matrix vary by orders of magnitude across parameter directions \cite{Dauphin2014}. Meanwhile, DP-SGD enforces spherical constraints via global $L_2$ clipping and isotropic noise injection \cite{Dwork2013}, treating all directions equally. This \textit{isotropy mismatch} causes low-sensitivity parameters to drown in noise while high-sensitivity directions are over-clipped \cite{chen2020understanding}, creating heterogeneous signal-to-noise ratios (SNR) (Fig. \ref{fig:snr_motivation}) that slow convergence and degrade performance \cite{tramèr2021differentiallyprivatelearningneeds}. Unlike non-private regimes where over-parameterization accelerates learning, private noise scales with model dimensionality $(\sqrt{d})$ \cite{bassily2014differentiallyprivateempiricalrisk}, often preventing deep networks from outperforming simple baselines.

\begin{figure}[t]
\centering
\includegraphics[width=\linewidth]{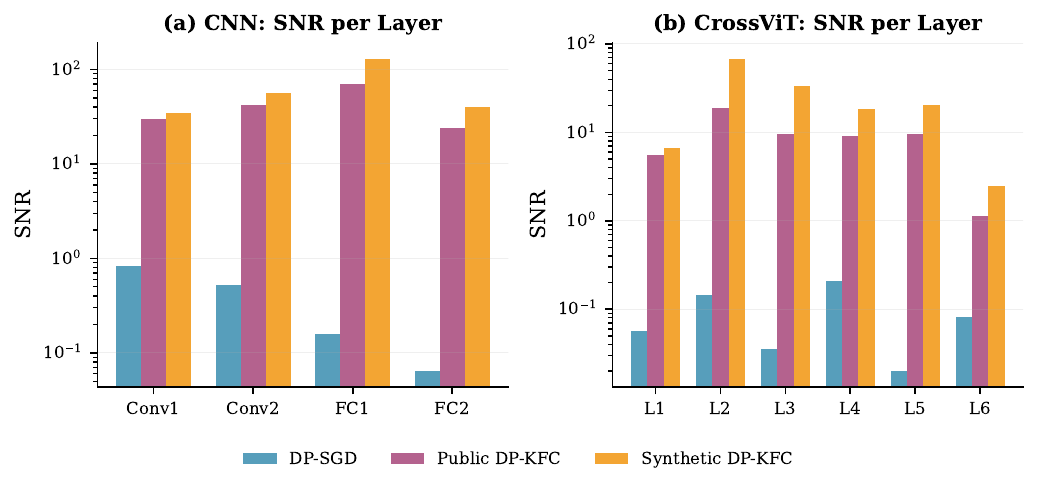}
\caption{\textbf{Architectural Preconditioning.}
Comparison of Layer-wise Signal-to-Noise Ratio (SNR) on (a) Simple CNN and (b) CrossViT-240 (Transformer) trained on CIFAR-100. Standard DP-SGD (Blue) suffers from signal collapse. DP-KFC reaches an optimal SNR profile across both local convolution and global attention architectures, matching the geometry of data-dependent proxies (Green) without accessing external data.}
\label{fig:snr_motivation}
\end{figure}

Resolving this conflict via adaptive optimization has proven suboptimal. Standard adaptive methods \cite{kingma2017adammethodstochasticoptimization, DBLP:journals/corr/MartensG15} attempt to estimate second-moment statistics from the training data. However, as recent work by \citet{ganesh2025designprinciplesprivateadaptive} demonstrates, aiming for unbiased estimates of these statistics in a private setting is fundamentally misguided; the noise required to privatize the preconditioner often destabilizes the update more than the preconditioning helps. Similarly, \citet{Krouka2025} note that while second-order information is vital for convergence, in federated setups communicating or computing it privately introduces prohibitive overheads. Consequently, the field has pivoted toward \textit{Precondition-then-Privatize} strategies that rely on public data to approximate the geometry \cite{Amid2022, Li2022}. Yet, this reliance is fragile: it introduces unquantified biases, fails in specialized domains where proxies do not exist (e.g., medical imaging), and complicates privacy auditing \cite{Tramer2024}.

This work proposes a third option, challenging the prevailing assumption that accurate estimation of the curvature matrix requires private or public data at all. Leveraging the block-diagonal structure of KFAC \cite{dangel2025kroneckerfactoredapproximatecurvaturekfac} and deep signal propagation principles \cite{schoenholz2016deep, yang2017mean}, we posit that the essential optimization geometry is an intrinsic property of the model architecture, not the specific data manifold.

\vspace{1em}
\textbf{DP-KFC} recovers second order statistics required to precondition gradients without accessing private training data or public datasets. The network is probed with synthetic inputs, built with random noise constructed to follow modality-specific frequency statistics (e.g., $1/f$ spectrum), that carry no privacy cost: we employ colored noise for images to simulate local and global dependencies, and we use random token sequences for text. Synthetic probing reconstructs per-layer KFAC preconditioning matrices applied to the private gradients \textit{before} the privacy mechanism. This approach aligns with the \textit{scale-then-privatize} design principle of \citet{ganesh2025designprinciplesprivateadaptive} while removing the dependency on private gradient statistics entirely. Our contributions are:

\begin{itemize}
    \item \textbf{Decoupled Geometry:} The Fisher Information Matrix decouples into \textit{Architectural Scaling} (recoverable via synthetic noise) and \textit{Input Correlation} (recoverable via general frequency statistics of the modality), enabling preconditioner construction without spending privacy budget.
    \item \textbf{Privacy-Free Preconditioning:} An algorithm estimates KFAC factors using synthetic noise, requiring no private or public data, which preconditions the loss landscape to match the isotropic privacy noise.
    \item \textbf{Robustness to Negative Transfer:} Public data proxies fail under domain mismatch; our method strictly dominates these baselines, offering a domain-agnostic approach for private learning.
\end{itemize}

\section{Related Work}
\label{sec:related_work}

\textbf{The Geometry of Differentially Private Optimization.} The isotropy mismatch between anisotropic loss landscapes and isotropic DP-SGD noise has motivated various mitigation strategies: adaptive clipping \cite{DBLP:journals/corr/abs-1905-03871, andrew2021differentially} dynamically adjusts clipping thresholds, while Rényi DP \cite{Mironov_2017} provides tighter composition bounds. Yet, ill-conditioning remains a fundamental bottleneck, as the noise-to-signal ratio grows with model dimensionality, limiting the scalability of private deep learning.

\textbf{Private First- and Second-Order Methods.} In non-private settings, optimization can be accelerated by transforming the gradient to mitigate the ill-conditioned parameter space. The majority of prior attempts in DP focus on \textit{first-order adaptive methods}, such as DP-Adam or DP-RMSProp. These algorithms attempt to capture local geometry by estimating diagonal second-moment statistics from private data. However, as shown in \citet{ganesh2025designprinciplesprivateadaptive}, creating a private preconditioner involves either estimating the statistics which consumes a significant portion of the privacy budget, or using a noisy preconditioner that introduces instability often yielding performance worse than non-adaptive DP-SGD \cite{chen2020understanding}. A complementary line of work instead corrects the update \emph{after} privatization: DP-AdamBC \cite{tang2024dpadambc} removes the DP-induced bias in Adam's second-moment estimate, and DiSK \cite{zhang2025disk} denoises the privatized gradient with a simplified Kalman filter. These post-privatization corrections are orthogonal to the \emph{scale-then-privatize} (pre-privatization) preconditioning we study; we show in Section~\ref{subsec:main_benchmarks} that the two combine favorably.

Translating \textit{true second-order methods} (like KFAC \citet{DBLP:journals/corr/MartensG15} or Newton's method \citet{ganesh2023faster}) to DP introduces a more severe \textit{dimensionality bottleneck}. Since the curvature matrix (Hessian) has dimensions $d \times d$, the number of sensitive entries scales quadratically with model size. Naive approaches that inject independent noise into the Hessian estimation suffer from a noise scale proportional to $d$ \cite{avellamedina2023differentiallyprivateinferencenoisy}. Injecting sufficient noise to privatize the curvature matrix compromises its accuracy and condition number, thus the inverse preconditioner becomes unstable, amplifying noise rather than correcting the loss landscape \cite{ganesh2023faster}.

\textbf{Optimization with Side Information.}
To bypass these problems in privacy-preserving estimation, recent methods leverage non-sensitive side information to approximate the loss geometry. A prevalent source of side information is public data.  Recent theoretical advances have established that access to unlabeled public data allows for computationally efficient private learning by reducing the problem to standard non-private optimization, even under bounded distribution shifts \cite{block2024oracle}. Methods like PDA-DPMD \cite{Amid2022} and AdaDPS \cite{Li2022} leverage this by estimating preconditioners or mirror maps on public proxies to transform private gradients prior to clipping and noise injection, while delayed-preconditioner DP-Adam \cite{li2023delayed} reduces preconditioner noise by updating second-moment estimates infrequently. However, this reliance on public data is increasingly being identified as a potential concern. \citet{Tramer2024} argue that large-scale web data often contains sensitive information, violating the initial privacy focus, and fails to generalize to the highly specialized domains (e.g., medical imaging) where DP is most critical.

\textbf{Deep Signal Propagation and Architectural Priors.}
Our approach builds on Mean Field Theory (MFT) for deep networks \cite{poole2016exponential, schoenholz2016deep}, which shows that activation and gradient variances evolve according to deterministic maps governed by architecture (depth, initialization, non-linearity) rather than data \cite{yang2017mean}. This perspective has enabled data-independent initialization schemes ensuring trainability of deep networks without skip connections \cite{xiao2018dynamicalisometrymeanfield}. We extend MFT applications to DP for data-independent preconditioning of private gradients.

\section{Preliminaries \& Setup}
\label{sec:prelim}

We consider a deep neural network $f_\theta: \mathcal{X} \rightarrow \mathcal{Y}$ composed of $L$ layers and parameterized by $\theta = \{W_l, b_l\}_{l=1}^L$. For an input sample $x \in \mathcal{X}$, the forward propagation is defined recursively as:
\[
a_0 = x, \quad
s_l = W_l a_{l-1} + b_l, \quad
a_l = \phi_l(s_l), \quad l = 1, \dots, L,
\]
where $W_l \in \mathbb{R}^{d_l \times d_{l-1}}$ is the weight matrix, $b_l \in \mathbb{R}^{d_l}$ the bias vector, and $\phi_l$ a component-wise nonlinearity. Given a dataset $\mathcal{D} = \{(x_i, y_i)\}_{i=1}^N$ and a loss $\ell(\cdot,\cdot)$, the empirical risk is
$
\mathcal{L}(\theta) = \frac{1}{N} \sum_{i=1}^N \ell(f_\theta(x_i), y_i).
$
We denote the error signal (gradient with respect to pre-activations) at layer $l$ for sample $i$ as
$
\delta_{l,i} = \nabla_{s_l} \ell(f_\theta(x_i), y_i),
$
so that the per-sample gradient with respect to $W_l$ is
$
\nabla_{W_l} \ell(f_\theta(x_i), y_i) = \delta_{l,i} a_{l-1,i}^\top.
$ When the inputs are synthetic rather than drawn from the private dataset, the resulting activations, pre-activations, and error signals are denoted with a tilde, e.g., $\tilde{a}_l$, $\tilde{s}_l$, and $\tilde{\delta}_l$.

\subsection{Differential Privacy Framework}
A randomized mechanism $\mathcal{M}$ satisfies $(\epsilon,\delta)$-Differential Privacy \cite{Dwork2006} if for any two adjacent datasets $\mathcal{D}, \mathcal{D}'$ and measurable set $S$,
\[
\Pr[\mathcal{M}(\mathcal{D}) \in S] \le e^\epsilon \Pr[\mathcal{M}(\mathcal{D}') \in S] + \delta.
\]
This definition imposes a strict geometric constraint: the mechanism must obscure the contribution of any individual sample vector within a sphere of uncertainty.

In deep learning, Differentially Private Stochastic Gradient Descent (DP-SGD) \cite{Abadi_2016} bounds influence via per-sample gradient clipping and injects Gaussian noise. For per-sample gradients $g^{(i)}$, clipping threshold $C$, and noise multiplier $\sigma$, the private gradient estimate is
\[
\tilde{g} = \frac{1}{|\mathcal{B}|} \left( \sum_{i \in \mathcal{B}} \text{Clip}_C(g^{(i)}) + \mathcal{N}(0, \sigma^2 C^2 I) \right),
\]
where $\text{Clip}_C(g) = g / \max(1, \|g\|_2/C)$. The $L_2$ sensitivity of the sum of clipped gradients is $C$, so the Gaussian mechanism with variance $\sigma^2 C^2$ provides $(\epsilon,\delta)$-DP guarantees under standard composition theorems. Privacy accounting is performed on the noisy sum; the subsequent averaging scales both signal and noise uniformly.

\subsection{Gradient Preconditioning and KFAC}
To address the anisotropic geometry of the loss landscape, Natural Gradient Descent (NGD) \cite{Amari1998} updates parameters using $\theta_{t+1} = \theta_t - \eta F^{-1} \nabla_\theta \mathcal{L}$, where $F$ is the Fisher Information Matrix (FIM). However, for differentially private optimization, we instead use \textit{gradient preconditioning} with $F^{-1/2}$:
\[
\theta_{t+1} = \theta_t - \eta F^{-1/2} \nabla_\theta \mathcal{L}(\theta_t).
\]
This transformation makes the preconditioned gradient $\tilde{g} = F^{-1/2} g$ approximately isotropic ($\mathbb{E}[\tilde{g}\tilde{g}^\top] \approx I$), which is essential for DP: the spherical clipping and isotropic noise addition of DP-SGD become geometrically matched in this transformed space (Appendix \ref{app:proof_isotropy}).

To make preconditioning tractable, we use the KFAC approximation \cite{DBLP:journals/corr/MartensG15}, assuming independence between activations and errors. Each layer-wise FIM block is approximated as a Kronecker product
\[
F_l \approx A_{l-1} \otimes G_l, \quad
A_{l-1} = \mathbb{E}[a_{l-1} a_{l-1}^\top], \quad
G_l = \mathbb{E}[\delta_l \delta_l^\top],
\]
allowing efficient computation of $F_l^{-1/2} = A_{l-1}^{-1/2} \otimes G_l^{-1/2}$.

\subsection{Mean-Field Signal Propagation}
\label{subsec:mft_prelim}
Following \citet{poole2016exponential, schoenholz2016deep}, in the infinite-width limit the pre-activations $s_l$ can be modeled as Gaussian with variance $q^l = \mathbb{E}[s_l^2]$. This variance evolves according to the deterministic map
\[
q^l = \mathcal{V}(q^{l-1} \mid \sigma_w, \sigma_b) = \sigma_w^2 \int \phi(\sqrt{q^{l-1}} z)^2 \mathcal{D}z + \sigma_b^2,
\]
where $\mathcal{D}z$ is the standard Gaussian measure. For standard initializations, $q^l$ converges exponentially to a stable fixed point $q^*$, making the deep layer variance independent of the input variance $q^0$ \cite{poole2016exponential}.

Similarly, the backward error variance $\tilde{q}^l = \mathbb{E}[\|\delta_l\|^2]$ evolves according to a dual recursion governed by $\phi'$, determining whether gradients vanish or explode \cite{schoenholz2016deep}. As shown by \citet{karakida2019universal}, the trace of the layer-wise Fisher block is proportional to the product of these variances:
\begin{equation}
\label{eq:mft}
\mathrm{Tr}(F_l) \propto d \cdot q^{l-1} \cdot \tilde{q}^l,    
\end{equation}

indicating that the overall magnitude of the preconditioner matrix (its trace) is determined by architecture ($\sigma_w, \sigma_b, \phi$) rather than data. While this result is derived in the infinite-width limit, we empirically validate in Section~\ref{subsec:spectral_validation} that the spectral alignment holds for practical finite-width architectures.

\section{DP-KFC}

We now introduce DP-KFC, a second-order private optimization method whose core components are:
$\emph{(i)}$ a data-free KFAC preconditioner,
$\emph{(ii)}$ synthetic inputs that recover the relevant architectural and data statistics, and
$\emph{(iii)}$ integration of the preconditioner into a differentially private update following the scale-then-privatize principle.

\subsection{Synthetic Preconditioning}
\label{sec:noise_preconditioning}

Since the overall magnitude of the preconditioner matrix is an intrinsic property of the architecture, the parameter sensitivities can be recovered without accessing private data. We therefore construct the KFAC preconditioning blocks probing the network with structurally coherent data but semantically independent to the private dataset (Algorithm~\ref{alg:data_free_precond}). 

We first generate a synthetic batch $\mathcal{B}_{\text{syn}} = \{\tilde{x}^{(j)}, \tilde{y}^{(j)}\}_{j=1}^M $ using modality-specific noise distributions (see Section ~\ref{sec:signal_variance} and Appendix~\ref{app:probing}).
A forward and backward pass is then performed with the current network parameters $\theta_t$ to obtain feature activations
$\tilde{a}_{l-1}$ and error signals $\tilde{\delta}_l$ for each layer. Using the outer-product structure of per-sample gradients, we aggregate these quantities into layer-wise Kronecker
covariance factors,
$\hat{A}_{l-1} = \mathbb{E}[\tilde{a}_{l-1} \tilde{a}_{l-1}^\top]$ and
$\hat{G}_l = \mathbb{E}[\tilde{\delta}_l \tilde{\delta}_l^\top]$,
with an added damping term $\pi I$ to floor the low-variance directions \cite{DBLP:journals/corr/MartensG15}.

To normalize the anisotropic parameter sensitivity induced by the network, we compute the regularized inverse square roots of
$\hat{A}_{l-1}$ and $\hat{G}_l$ via eigendecomposition, yielding matrices $U_{A,l}$ and $U_{G,l}$, with a term $\gamma I$ for numerical stability. 
The resulting preconditioner $\tilde{F}^{-1/2}$ is implicitly given by the Kronecker product $U_{A} \otimes U_{G}$. This formulation avoids materializing the full FIM, whose dimensionality would be prohibitive.

\begin{algorithm}[t]
\caption{DP-KFC Preconditioner Construction (Synthetic Probing)}
\label{alg:data_free_precond}
\begin{algorithmic}[1]
\STATE \textbf{Input:} Neural Network $f_\theta$, batch size $M$, damping $\pi$, stability $\gamma$
\STATE \textbf{Output:} Data-free preconditioner $\tilde{F} = \{\tilde{F}_l^{-1/2}\}_{l=1}^L$
\STATE \textcolor{auburn}{// Synthetic input generation (independent of private data)}
\STATE $\{\tilde{x}^{(j)}, \tilde{y}^{(j)}\}_{j=1}^M \gets \texttt{Generate\_Samples}(M)$ \hfill (App.~\ref{app:probing})
\STATE \textcolor{auburn}{// Forward and backward passes (see Preliminaries)}
\STATE Compute $\{\tilde{a}_{l-1}^{(j)}, \tilde{\delta}_l^{(j)}\}$ via forward and backward propagation
\STATE \textcolor{auburn}{// Layer-wise KFAC factor estimation and normalization}
\FOR{each layer $l \in [L]$}
    \STATE \textcolor{auburn}{// Compute activation and gradients covariances} 
    \STATE $\hat{A}_{l-1} = \frac{1}{M} \sum_{j=1}^M \tilde{a}_{l-1}^{(j)} (\tilde{a}_{l-1}^{(j)})^\top + \pi I$
    \STATE $\hat{G}_l = \frac{1}{M} \sum_{j=1}^M \tilde{\delta}_l^{(j)} (\tilde{\delta}_l^{(j)})^\top + \pi I$
    \STATE \textcolor{auburn}{// Eigen decomposition}
    \STATE $[Q_A, \Lambda_A] \gets \text{EigDecomp}(\hat{A}_{l-1})$
    \STATE $[Q_G, \Lambda_G] \gets \text{EigDecomp}(\hat{G}_l)$
    \STATE \textcolor{auburn}{// Curvature normalization}
    \STATE $U_{A,l} = Q_A (\Lambda_A + \gamma I)^{-1/2} Q_A^\top$
    \STATE $U_{G,l} = Q_G (\Lambda_G + \gamma I)^{-1/2} Q_G^\top$
    \STATE $\tilde{F}_l^{-1/2} \gets U_{A,l} \otimes U_{G,l}$
\ENDFOR
\STATE \textbf{return} $\tilde{F}^{-1/2} = \{\tilde{F}_l^{-1/2}\}_{l=1}^L$
\end{algorithmic}
\end{algorithm}

\subsection{Synthetic Data Generation}
\label{sec:signal_variance}

Creating the synthetic probes involves generating an input batch that is capable of simulating the forward (layer activations) and backward (error gradients) pass of private data while remaining independent from it to ensure DP guarantees. 

Let $A_{l-1} = Q_{A,l-1} \Lambda_{A,l-1} Q_{A,l-1}^\top$ denote the eigendecomposition of the activation covariance at layer $l$.
The eigenvalues $\Lambda_{A,l-1}$ quantify the scale of each principal direction, while the eigenvectors $Q_{A,l-1}$ define correlation structures across features. Following Eq. \ref{eq:mft}, signal propagation magnitudes ($\mathrm{Tr}(A_{l-1})$ and $\mathrm{Tr}(G_{l-1})$) are fixed by weight variances rather than input statistics; hence, unstructured noise with random labels suffices to recover the scale of the KFAC factors. Synthetic labels $\tilde{y}$ are sampled uniformly from the label space; since KFAC factors depend on error signal magnitudes rather than label correctness, random labels suffice for estimating $G_l$.

However, the eigenvectors $Q_{A,l-1}$ define the spatial or sequential relationships among features, which are entirely absent in uncorrelated (full-rank) white noise. Moreover, white noise distributes energy uniformly across all frequencies, whereas deep networks propagate low-frequency features while suppressing high-frequency ones \cite{rahaman2019spectralbiasneuralnetworks}. 
To simulate these structural correlations, we exploit the statistical regularity of natural images, which universally exhibit $1/f^\alpha$ power spectra, where amplitude decays with spatial frequency \cite{field1987relations}. This statistical regularity means input correlations can be approximated without access to private data. We explicitly construct synthetic batches following this power-law distribution ($S(f) \propto 1/f^\alpha$). For image domains, we generate Pink Noise by modulating white noise $Z \in \mathbb{C}^{H \times W}$ in the frequency domain. Let $\mathbf{u} \in \mathbb{R}^2$ be the spatial frequency coordinate vector relative to the zero-frequency component (the center of the spectrum). We scale the spectrum as:
\begin{equation}
    \label{eq:pink_noise_gen}
    \tilde{Z}_{\mathbf{u}} = Z_{\mathbf{u}} \cdot \frac{1}{\|\mathbf{u}\|_2^{\alpha/2} + \epsilon}, \quad x_{\text{syn}} = \text{Re}(\mathcal{F}^{-1}(\tilde{Z}))
\end{equation}
where $\|\mathbf{u}\|_2$ is the Euclidean distance from the spectral center, and $\epsilon$ prevents division by zero at the origin. Here, Pink Noise ($\alpha \approx 1$) concentrates probing energy into the low-frequency modes to match the ideal private preconditioner.

For discrete domains (NLP), inputs are token indices rather than continuous signals, so the analogue of spectral shaping is a \emph{structural token sampling} strategy: each probe is a sequence drawn from a frequency-weighted distribution over the vocabulary (mimicking the heavy-tailed Zipfian statistics of natural text), with the model's structural tokens (\texttt{[CLS]}, \texttt{[SEP]}, padding) placed at their canonical positions, so that the probe exercises the same attention and LayerNorm pathways as real text without carrying its semantics. Full details of both generators are in Appendix~\ref{app:probing}. As discussed in Section~\ref{subsec:main_benchmarks}, this recovers the architectural factors but not the low-dimensional manifold occupied by real text, which is the main reason synthetic probing trails public-data preconditioning on NLP fine-tuning.

\subsection{Privatizing preconditioned gradients}
\label{subsec:optdyn}

Once the preconditioning matrices $U_{A,l}$ and $U_{G,l}$ are computed via the data-free procedure (Algorithm \ref{alg:data_free_precond}), they are frozen and broadcast to the private training loop. For the subsequent $T_{freq}$ steps, we replace the standard gradient computation with a preconditioned gradient oracle following Algorithm \ref{alg:dp_kfac_loop}. This operation transforms the per-sample gradients from the original parameter space, where parameter sensitivity varies by orders of magnitude, into a preconditioned coordinate system with homogeneous parameter sensitivity. Contrary to standard DP-SGD, where high-sensitivity directions are severely clipped and low-sensitivity directions are submerged by noise, in the preconditioned space, every parameter contributes equally to the norm ($Cov(\tilde g)= I$). Consequently, the fixed clipping threshold is applied uniformly across all parameters, preserving the update direction while upscaling low-sensitivity directions. 

The transformation $\tilde{g}_l = U_{G,l} \cdot g_l \cdot U_{A,l}$ in Algorithm~\ref{alg:dp_kfac_loop} assumes the gradient is a matrix. For convolutional layers, we apply the standard im2col transformation, treating each spatial location as an independent sample \cite{DBLP:journals/corr/MartensG15}. For Transformers, all attention and MLP components are Linear layers, so standard KFAC applies directly. Following the K-FAC-reduce formulation of \citet{eschenhagen2023kfac}, we average covariances over the weight-sharing dimension (spatial locations for CNNs, sequence positions for attention), which is computationally efficient and empirically equivalent to the more expensive K-FAC-expand variant. Implementation details are provided in Appendix~\ref{app:kfac_architectures}.

\begin{algorithm}[t]
\caption{DP-KFC Training Loop}
\label{alg:dp_kfac_loop}
\begin{algorithmic}[1]
\STATE {\bfseries Input:} Private Data $\mathcal{D}$, Learning rate $\eta$, Noise scale $\sigma$, Clip $C$, Freq $T_{freq}$
\STATE {\bfseries Output:} Private Model Parameters $\theta_T$

\FOR{$t = 0, \dots, T-1$}
    \STATE \textcolor{auburn}{// Periodic Preconditioner Update}
    \IF{$t \mod T_{freq} == 0$}
        \STATE $\{U_{A,l}, U_{G,l}\} \gets \textbf{Alg. \ref{alg:data_free_precond}}(\theta_t, M, \pi, \gamma)$ 
    \ENDIF

    \STATE \textcolor{auburn}{// Private Gradient Step}
    \STATE Sample private batch $\mathcal{B}_{\text{priv}} \subset \mathcal{D}$
    \FOR{sample $i \in \mathcal{B}_{\text{priv}}$}
        \FOR{layer $l \in L$}
            \STATE Compute Gradient: $g_{l}^{(i)} = \nabla_{W_l} \ell(x_i)$
            \STATE \textbf{Transform:} $\tilde{g}_{l}^{(i)} \leftarrow U_{G,l} \cdot g_{l}^{(i)} \cdot U_{A,l}$
        \ENDFOR
        \STATE Compute Norm: $\nu_i = \sqrt{\sum_l \|\tilde{g}_{l}^{(i)}\|_F^2}$
        \STATE \textbf{Clip:} $\bar{g}^{(i)} \leftarrow \tilde{g}^{(i)} / \max(1, \nu_i/C)$
    \ENDFOR
    
    \STATE \textcolor{auburn}{// 3. Noise Addition and Descent}
    \STATE $\tilde{G} \leftarrow \frac{1}{|\mathcal{B}|} \left( \sum_i \bar{g}^{(i)} + \mathcal{N}(0, \sigma^2 C^2 I) \right)$ \hfill \textcolor{auburn}{// Averaging}
    \STATE $\theta_{t+1} \leftarrow \theta_t - \eta \cdot \tilde{G}$ 
\ENDFOR
\end{algorithmic}
\end{algorithm}

\section{Convergence Analysis}
\label{sec:convergence}

This section provides a convergence analysis for DP-KFC on non-convex objectives, the standard setting for deep neural networks. In DP-KFC, private gradients are normalized, based on the inherent parameter sensitivity estimates, before injecting noise. Thus, reducing the condition number governing the convergence rate. We now analyze the proposed method following standard assumptions for stochastic non-convex optimization.

\begin{assumption}[$L$-Smoothness]
\label{ass:smooth}
The loss function $\mathcal{L}(\theta)$ is differentiable and $L$-smooth, i.e., $\|\nabla \mathcal{L}(\theta_1) - \nabla \mathcal{L}(\theta_2)\| \le L \|\theta_1 - \theta_2\|$ for all $\theta_1, \theta_2 \in \mathbb{R}^d$.
\end{assumption}

\begin{assumption}[Bounded Variance]
\label{ass:bounded}
The stochastic gradient oracle is unbiased, $\mathbb{E}[\nabla \ell(x_i)] = \nabla \mathcal{L}(\theta)$, with bounded variance $\sigma_{sgd}^2$. Since preconditioning is applied before clipping (Algorithm~\ref{alg:dp_kfac_loop}), gradient norms are homogenized prior to the clipping operation. This reduces the fraction of gradients exceeding the threshold $C$, minimizing clipping-induced bias compared to standard DP-SGD.
\end{assumption}

\begin{assumption}[Bounded Preconditioner]
\label{ass:bounded_pre}
The noise-derived preconditioner $P_t$ (constructed via preconditioning matrices $U_G, U_A$) has bounded eigenvalues: $0 < \lambda_{min} I \preceq P_t \preceq \lambda_{max} I$. This is structurally guaranteed in our algorithm by the Tikhonov damping (regularization) terms $\pi I$ and $\gamma I$ added to the covariance matrices before inversion.
\end{assumption}

To establish the convergence guarantee, these structural properties are essential. Smoothness (Assumption \ref{ass:smooth}) bounds the loss reduction via a quadratic approximation. Assumption \ref{ass:bounded} separates the estimation variance from the privacy noise. Finally, the spectral bounds in Assumption \ref{ass:bounded_pre} are critical: the lower bound guarantees that the preconditioned update remains a valid descent direction, while the upper bound limits the amplification of stochastic variance.

\subsection{Convergence Rate}

The following theorem characterizes the convergence rate to a critical point. We separate the geometric scaling of the variance coming from the gradient stochastic noise and the additive nature of the privacy noise.

\begin{theorem}[Non-Convex Convergence of DP-KFC]
\label{thm:convergence}
Let the learning rate be $\eta = \frac{1}{\sqrt{T}}$. After $T$ iterations with batch size $B$, the algorithm yields a solution sequence $\{\theta_t\}$ satisfying:
\begin{multline}
\label{eq:convergence_bound}
    \min_{t \in [0, T-1]} \mathbb{E}[\|\nabla \mathcal{L}(\theta_t)\|^2] \le \frac{C_1}{\lambda_{min} \sqrt{T}} + \\
    \frac{C_2}{\lambda_{min} \sqrt{T}} \left( \lambda_{max}^2 \sigma_{sgd}^2 + \frac{d \sigma^2 C^2}{B^2} \right)
\end{multline}
where $C_1$ depends on the initial loss gap $\mathcal{L}_0 - \mathcal{L}^*$, and $C_2$ depends on the smoothness constant $L$.
\end{theorem}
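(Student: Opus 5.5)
The plan is the standard descent-lemma argument for preconditioned SGD, with the privacy noise treated as an additive zero-mean perturbation. Write the update as $\theta_{t+1} = \theta_t - \eta \tilde G_t$. Here $\tilde G_t = P_t \bar g_t + \frac{1}{B}\xi_t$, where $\bar g_t$ is the minibatch average of (preconditioned-then-clipped, pulled back) per-sample gradients, $P_t$ is the frozen preconditioner built by Algorithm~\ref{alg:data_free_precond}, and $\xi_t \sim \mathcal{N}(0,\sigma^2C^2 I)$. Since $P_t$ depends only on $\theta_t$ and the synthetic probes, it is independent of the private batch and of $\xi_t$ conditionally on $\theta_t$. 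Following the remark in Assumption~\ref{ass:bounded}, I will treat clipping as inactive (or its bias as absorbed), so that $\mathbb{E}_t[\bar g_t] = \nabla\mathcal{L}(\theta_t)$. Under this idealization, conditioning on the filtration up to $t$ gives $\mathbb{E}_t[\tilde G_t] = P_t \nabla\mathcal{L}(\theta_t)$.

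Next, apply Assumption~\ref{ass:smooth} through the descent lemma:
\[
\mathcal{L}(\theta_{t+1}) \le \mathcal{L}(\theta_t) - \eta\langle \nabla\mathcal{L}(\theta_t), \tilde G_t\rangle + \tfrac{L\eta^2}{2}\|\tilde G_t\|^2 .
\]
Take conditional expectations. The linear term becomes $-\eta\,\nabla\mathcal{L}^\top P_t \nabla\mathcal{L} \le -\eta\lambda_{min}\|\nabla\mathcal{L}(\theta_t)\|^2$ by Assumption~\ref{ass:bounded_pre}.

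For the quadratic term, expand $\|\tilde G_t\|^2$. The cross terms with $\xi_t$ vanish because $\xi_t$ is zero-mean and independent. The bias–variance split of $P_t\bar g_t$ gives
\[
\mathbb{E}_t\|\tilde G_t\|^2 \le \lambda_{max}^2\|\nabla\mathcal{L}(\theta_t)\|^2 + \lambda_{max}^2\sigma_{sgd}^2 + \frac{d\sigma^2C^2}{B^2},
\]
using $\|P_t v\|\le\lambda_{max}\|v\|$ and $\mathbb{E}\|\xi_t\|^2 = d\sigma^2C^2$. I will keep $\sigma_{sgd}^2$ as the variance of the batch average, as the statement does.

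The main obstacle is the $\lambda_{max}^2\|\nabla\mathcal{L}\|^2$ term on the right. It must be absorbed into the descent term, which requires $\tfrac{L\eta}{2}\lambda_{max}^2 \le \tfrac{\lambda_{min}}{2}$, i.e.\ $\eta \le \lambda_{min}/(L\lambda_{max}^2)$. With $\eta = 1/\sqrt{T}$ this holds once $T \ge (L\lambda_{max}^2/\lambda_{min})^2$. I will state this burn-in explicitly and note that for smaller $T$ the bound is trivial or can be folded into the constants $C_1, C_2$. After absorption,
\[
\tfrac{\eta\lambda_{min}}{2}\,\mathbb{E}\|\nabla\mathcal{L}(\theta_t)\|^2 \le \mathbb{E}[\mathcal{L}(\theta_t)-\mathcal{L}(\theta_{t+1})] + \tfrac{L\eta^2}{2}\Big(\lambda_{max}^2\sigma_{sgd}^2 + \tfrac{d\sigma^2C^2}{B^2}\Big).
\]

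Finally, telescope over $t=0,\dots,T-1$ and use $\mathcal{L}(\theta_T)\ge\mathcal{L}^*$. Dividing by $T\eta\lambda_{min}/2$ bounds the average, and hence the minimum, of $\mathbb{E}\|\nabla\mathcal{L}(\theta_t)\|^2$ by
\[
\frac{2(\mathcal{L}_0-\mathcal{L}^*)}{\lambda_{min}T\eta} + \frac{L\eta}{\lambda_{min}}\Big(\lambda_{max}^2\sigma_{sgd}^2+\tfrac{d\sigma^2C^2}{B^2}\Big).
\]
Substituting $\eta = 1/\sqrt{T}$ yields \eqref{eq:convergence_bound} with $C_1 = 2(\mathcal{L}_0-\mathcal{L}^*)$ and $C_2 = L$.

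I would add a remark on clipping. If it is active, one gets an additional bias term, bounded by $\mathbb{E}\|\nabla\mathcal{L}\|\,\|\mathbb{E}_t\bar g_t - \nabla\mathcal{L}\|$, which the theorem implicitly neglects per Assumption~\ref{ass:bounded}.
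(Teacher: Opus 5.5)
Your proof is correct and follows the same route as the paper's appendix (descent lemma, Rayleigh lower bound on $\nabla\mathcal{L}^\top P_t\nabla\mathcal{L}$, vanishing noise cross-term, bias--variance split, telescoping, $\eta=1/\sqrt{T}$), and it is the more careful version. The paper places $\lambda_{\max}^2\|\nabla\mathcal{L}(\theta_t)\|^2$ inside a quantity $M$ that it then sums as if it were constant, and its derivation yields $d\sigma^2C^2$ without the $1/B^2$; your absorption under $\eta\le\lambda_{\min}/(L\lambda_{\max}^2)$ and your explicit $\xi_t/B$ are exactly what the stated bound requires. For $T$ below your burn-in, make the folding explicit through the $t=0$ iterate, using $\|\nabla\mathcal{L}(\theta_0)\|^2\le 2L(\mathcal{L}_0-\mathcal{L}^*)$. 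This gives the bound for every $T$ with $C_1=2\max(1,L^2\lambda_{\max}^2)(\mathcal{L}_0-\mathcal{L}^*)$, so $C_1$ then also depends on $L\lambda_{\max}$.
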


Eq. \eqref{eq:convergence_bound} shows the advantage over standard preconditioned DP-SGD. In methods that directly precondition noisy gradients ($P_t(\bar{g}_t + \xi_t)$), the privacy noise is multiplied by the preconditioner, amplifying the privacy error variance by $\lambda_{max}^2$. In contrast, our approach decouples these stages: because $P_t$ is batch-independent, we reshape the gradient before the privacy mechanism applies. Consequently, the privacy term $d \sigma^2 C^2 / B^2$ avoids being amplified by $\lambda_{max}^2$. The eigenvalue terms $\lambda_{min}^{-1}$ and $\lambda_{max}^2$ in the bound are controlled by the damping parameter $\gamma$ in Algorithm~\ref{alg:data_free_precond}: eigenvalues satisfy $\lambda_{min} \geq \sqrt{\gamma}$ and $\lambda_{max} \leq 1/\sqrt{\gamma}$ for normalized covariances. With typical $\gamma = 10^{-2}$, the effective condition number remains moderate. In high-privacy regimes where privacy noise dominates, avoiding its amplification by $\lambda_{max}^2$ yields the empirical gains. The $\mathcal{O}(T^{-1/2})$ rate is the optimal rate class for first-order stochastic non-convex optimization \cite{arjevani2023lower}; as is standard for second-order methods in the non-convex regime, preconditioning improves the \emph{constants} (a smaller effective condition number) rather than the rate, so the bound is informative precisely when the realized preconditioner is well conditioned. A full derivation is provided in Appendix~\ref{app:convergence_proof}, and complexity analysis in Appendix~\ref{app:complexity}.

\textbf{Why the realized preconditioner is well conditioned.} The bound depends on the spectrum of the \emph{realized} map $\tilde{F}^{-1/2} F \tilde{F}^{-1/2}$, which is small precisely because the synthetic and private curvatures share their eigenspectrum. Building on the mean-field variance contraction of Section~\ref{subsec:mft_prelim} (formalized in Proposition~\ref{prop:variance_contraction}), Theorem~\ref{thm:spectral_invariance} (Appendix~\ref{app:spectral_analysis}) shows that the eigenspectra of the private and synthetic KFAC factors are proportional in deep networks, so preconditioning with $\tilde{F}^{-1/2}$ collapses the effective curvature toward a near-identity (scalar) operator, with the residual scalar absorbed by the clipping norm and learning rate. Section~\ref{subsec:spectral_validation} and Fig.~\ref{fig:spectrum_alignment} confirm this alignment empirically across MLP, CNN, and attention layers; Appendix~\ref{app:sensitivity} further characterizes how performance degrades gracefully as the synthetic prior is mismatched.

\begin{remark}[The privacy wall]
\label{rem:privacy_wall}
Unlike non-private training, where compute bounds the number of useful steps, DP training has an intrinsic horizon: privacy accounting fixes the number of steps $T$ achievable at a target $(\epsilon,\delta)$, since each step spends budget. Per-step wall-clock overhead (here $\approx\!2.2\times$ lower sample throughput than DP-SGD; Appendix~\ref{app:complexity}) is therefore traded against \emph{convergence within a fixed step count}: a method that makes each of the $T$ permitted steps more effective is preferable even if individual steps are slower, because extra steps are not an option. This reframes the cost of second-order preconditioning under DP and motivates spending compute on better geometry rather than on more iterations.
\end{remark}

\subsection{Privacy Analysis}
\label{subsec:privacy_analysis}

The privacy guarantee of DP-KFC relies on the preconditioner $P_t$ being batch-independent: it is computed from synthetic noise and the current model state $\theta_t$, which depends only on \textit{past} privatized releases, not the current batch $\mathcal{B}$. This ensures that the $L_2$ sensitivity of the preconditioned, clipped gradient sum remains bounded by $C$.

\subsubsection{Formal Guarantee}
The following proposition establishes that applying a batch-independent linear transformation prior to the Gaussian mechanism preserves the standard differential privacy bounds.

\begin{proposition}[Privacy of Noise-Preconditioned Updates]
\label{thm:sensitivity}
Let $\mathcal{M}_{Gauss}(\mathbf{x}) = \sum \mathbf{x}_i + \mathcal{N}(0, \sigma^2 C^2 I)$ be the standard Gaussian mechanism with $L_2$ sensitivity $C$. Let $P_t$ be a linear operator that is independent of the current batch $\mathcal{B}$ (computed from synthetic noise and model parameters $\theta_t$ derived from past privatized updates). The composite mechanism:
\begin{equation}
    \mathcal{M}_{KFC}(\mathcal{B}) = \sum_{i \in \mathcal{B}} \text{Clip}_C(P_t \cdot \nabla \ell(x_i)) + \mathcal{N}(0, \sigma^2 C^2 I)
\end{equation}
satisfies the same $(\alpha, \epsilon)$-Rényi Differential Privacy (RDP) \cite{Mironov_2017} guarantees as $\mathcal{M}_{Gauss}$ (Appendix~\ref{app:privacy_proof}).
\end{proposition}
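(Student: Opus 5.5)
The argument reduces $\mathcal{M}_{KFC}$ to a Gaussian mechanism on a sum of vectors of norm at most $C$. The only nontrivial ingredient is that $P_t$ does not depend on the batch being privatized. We then invoke the standard RDP bound for the Gaussian mechanism and compose adaptively over iterations.

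\textbf{Step 1: fix the conditioning.} Condition on the history $\mathcal{H}_{t}$, consisting of all previously released noisy updates $\tilde{G}_0,\dots,\tilde{G}_{t-1}$ and hence $\theta_t$. Also condition on the synthetic batch $\mathcal{B}_{\text{syn}}$, which Algorithm~\ref{alg:data_free_precond} draws independently of $\mathcal{D}$. Under this conditioning, $P_t = U_{A}\otimes U_{G}$ is a deterministic linear map, and it is the same map for adjacent datasets $\mathcal{D},\mathcal{D}'$. This is the step I expect to need the most care. One must check that nothing in Algorithm~\ref{alg:data_free_precond} touches private data: the probes and random labels are data-independent, and $\theta_t$ is a post-processing of earlier private outputs. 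Given that, the per-step mechanism is a function $h_{\mathcal{H}_t}$ applied to the current batch.

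\textbf{Step 2: sensitivity.} Define the per-example map $v(x) = \text{Clip}_C(P_t \nabla\ell(x))$. By the definition of $\text{Clip}_C$, $\|v(x)\|_2 \le C$ for every $x$, whatever $P_t$ is, including a poorly conditioned one. Hence for add/remove adjacency the sums over $\mathcal{B}$ and $\mathcal{B}'$ differ by a single term of norm at most $C$. For replace-one adjacency the bound is $2C$, exactly as in DP-SGD, so the sensitivity matches that of $\mathcal{M}_{Gauss}$. Linearity of $P_t$ is not actually needed here; only batch-independence and clipping after the transform matter. This explains why the ordering ``precondition, then clip'' is essential.

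\textbf{Step 3: RDP and composition.} Conditioned on $\mathcal{H}_t$, the output is $S + \mathcal{N}(0,\sigma^2C^2 I)$ with $\Delta_2 S \le C$. The Gaussian mechanism therefore gives $\epsilon(\alpha) = \alpha/(2\sigma^2)$ \cite{Mironov_2017}, identical to $\mathcal{M}_{Gauss}$. With Poisson subsampling, the same subsampled-Gaussian RDP curve applies, because that bound depends only on sensitivity and $\sigma$. Since each step's guarantee holds for every fixed history, adaptive composition of RDP sums the per-step bounds over $T$ steps, as in DP-SGD. Finally, the descent update $\theta_{t+1}=\theta_t-\eta\tilde{G}$ and the later preconditioner refreshes are post-processing, so they incur no additional privacy cost.
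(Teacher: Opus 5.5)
Your proof is correct and takes essentially the same route as the paper's Appendix~\ref{app:privacy_proof}: data-independence of $P_t$ (the paper's Lemma~\ref{lemma:data_independence}) together with clipping \emph{after} the transform bounds the sensitivity of the clipped sum by $C$, so the standard Gaussian-mechanism RDP analysis applies unchanged. Your version is tidier on two points: you correctly give sensitivity $C$ under add/remove adjacency and $2C$ under replace-one (matching DP-SGD), whereas the paper's proof asserts $C$ under the replacement definition; and you make explicit the conditioning on the history and synthetic batch, the subsampled-Gaussian bound, and adaptive composition, all of which the paper leaves implicit.
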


DP-KFC can be reduced to a form of implicit adaptive clipping. By transforming the gradient with $P_t$ prior to the fixed clipping operation, we dynamically align the clipping boundary with the local geometry of the loss landscape. This allows us to normalize gradient scales, ensuring significant updates are not disproportionately clipped, without the need to estimate gradient norms from private data. Consequently, we obtain the benefits of adaptive clipping \cite{andrew2021differentially} while retaining the simple, tight accounting of standard fixed-threshold mechanisms, as the adaptation is derived entirely from non-sensitive architectural properties.

Having established the theoretical foundations for convergence and privacy preservation, we now validate these properties empirically across diverse tasks and privacy regimes.

\section{Experiments}
\label{sec:experiments}

This section evaluates DP-KFC with Synthetic Preconditioning across computer vision and natural language processing tasks under varying privacy regimes. Three sets of experiments are shown. First, \emph{Validation of Architectural Dominance} (Section~\ref{subsec:spectral_validation}): parameter sensitivity and the eigenspectrum of the KFAC factors can be effectively extracted using only model architecture and synthetic noise. Second, \emph{Benchmark Performance across Modalities} (Section~\ref{subsec:main_benchmarks}): data-free preconditioning consistently outperforms standard DP-SGD. Third, \emph{Transfer and Domain Mismatch} (Section~\ref{subsec:hierarchy}): which scenarios benefit from public data versus synthetic noise, depending on domain alignment.  Ablation studies confirm robustness to hyperparameter variations and structured noise compositions (Appendix~\ref{app:sensitivity}).\footnote{Code is publicly available at \url{https://github.com/molinamarcvdb/DP-KFC}.}

\subsection{Experimental Setup}
\label{subsec:setup}

\textbf{Implementation and Hardware.} All experiments were implemented in PyTorch using the Opacus library \cite{opacus} for differential privacy accounting. Training was conducted on NVIDIA A100 GPUs. All results are averaged over 10 independent seeds; shaded regions in figures and $\pm$ values in tables denote one standard deviation.

\textbf{Models and Tasks.} We evaluate across four tasks: CNN on MNIST, CrossViT on CIFAR-100, BERT on StackOverflow, and logistic regression on IMDB. For Transformer architectures, we use AdamW as it outperforms SGD, making the DP-SGD baseline equivalent to DP-Adam. Full benchmark details in Appendix~\ref{app:table_centralized}. Importantly, hyperparameter analysis (Appendix~\ref{app:sensitivity}) shows that KFAC-specific parameters ($\gamma$, $\alpha$) have low importance and can use defaults without significant performance degradation.

\subsection{Validation of Architectural Dominance}
\label{subsec:spectral_validation}
Architectural dominance is validated by comparing eigenspectra of the KFAC factors derived from three sources: private data (oracle), public data (proxies), and synthetic noise (ours). For each source, we estimate the covariance factors $\hat{A}_l$ and $\hat{G}_l$, then compute their eigendecompositions. Since the layer-wise preconditioner is the Kronecker product $F_l^{-1} \approx A_l^{-1} \otimes G_l^{-1}$, the full spectrum is reconstructed as $\lambda_F = \lambda_A \otimes \lambda_G$. To track alignment throughout training, we measure \emph{cosine similarity} $\text{CosSim}(C^*, \hat{C}) = \text{Tr}(C^{*\top} \hat{C}) / (\|C^*\|_F \|\hat{C}\|_F)$ for directional alignment and \emph{relative Frobenius error} $\text{RelErr} = \|C^* - \hat{C}\|_F / \|C^*\|_F$ for magnitude error \cite{kunstner2019limitations}.

Fig.~\ref{fig:spectrum_alignment} confirms that Synthetic DP-KFC recovers the eigenvalue decay of representative MLP, CNN and Attention layers, matching public data baselines. The alignment extends to attention-based architectures, where the QKV projection layer, despite having millions of eigenvalues, exhibits the same architecture-determined spectral decay. In Appendix~\ref{app:spectral_analysis} we provide the complete layer-wise spectral analysis and validate these findings further using Stochastic Lanczos Quadrature \cite{DBLP:journals/corr/abs-1901-10159} to estimate the full Hessian density.

\begin{figure}[t]
    \centering
    \includegraphics[width=1.0\linewidth]{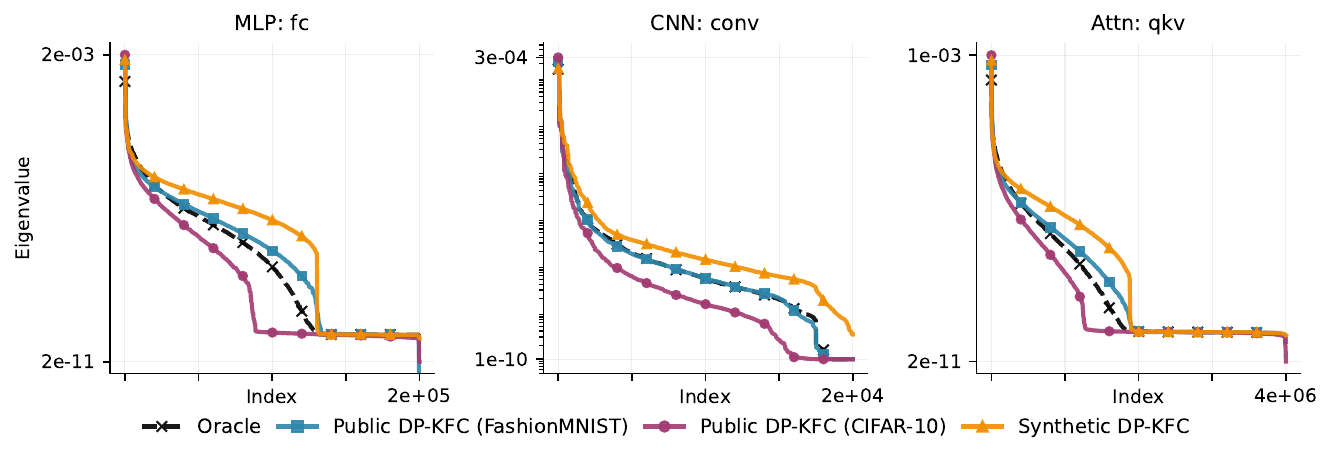} %
     \caption{\textbf{Eigenspectrum Alignment.} Sorted eigenvalues of KFAC factors
      for distinct layers: (a) MLP fully-connected, (b) CNN convolutional, and
      (c) Attention QKV projection. Domain-matched public data (FashionMNIST, blue) aligns closely with the private oracle (black, dashed). Domain-mismatched data (CIFAR-10, purple) shows larger deviation. Synthetic DP-KFC (orange) captures the eigenvalue decay across architecture types.}
    \label{fig:spectrum_alignment}
\end{figure}

Beyond static alignment, the synthetic estimator tracks the private eigenspectrum throughout training, not merely at initialization. Fig.~\ref{fig:kfac_cov_track} shows that in early layers, Synthetic DP-KFC maintains cosine similarity above 0.8 with the oracle, comparable to matched public data, while mismatched data stagnates around 0.6. Scale error of Synthetic DP-KFC remains lower than public proxies throughout. In deeper layers, directional alignment degrades across all methods ($<0.6$), indicating eigenvector directions depend on private labels. However, Synthetic DP-KFC achieves superior scale stability with lower error magnitude than public proxies. Full per-layer metrics in Appendix~\ref{app:cov_tracking}. The same directional ranking extends to attention-based architectures (Appendix~\ref{app:cov_tracking_vit}), though the
  scale advantage of synthetic noise is reduced by LayerNorm, which bounds activation magnitudes across sources.


\begin{figure}[t]
    \centering
    \includegraphics[width=\linewidth]{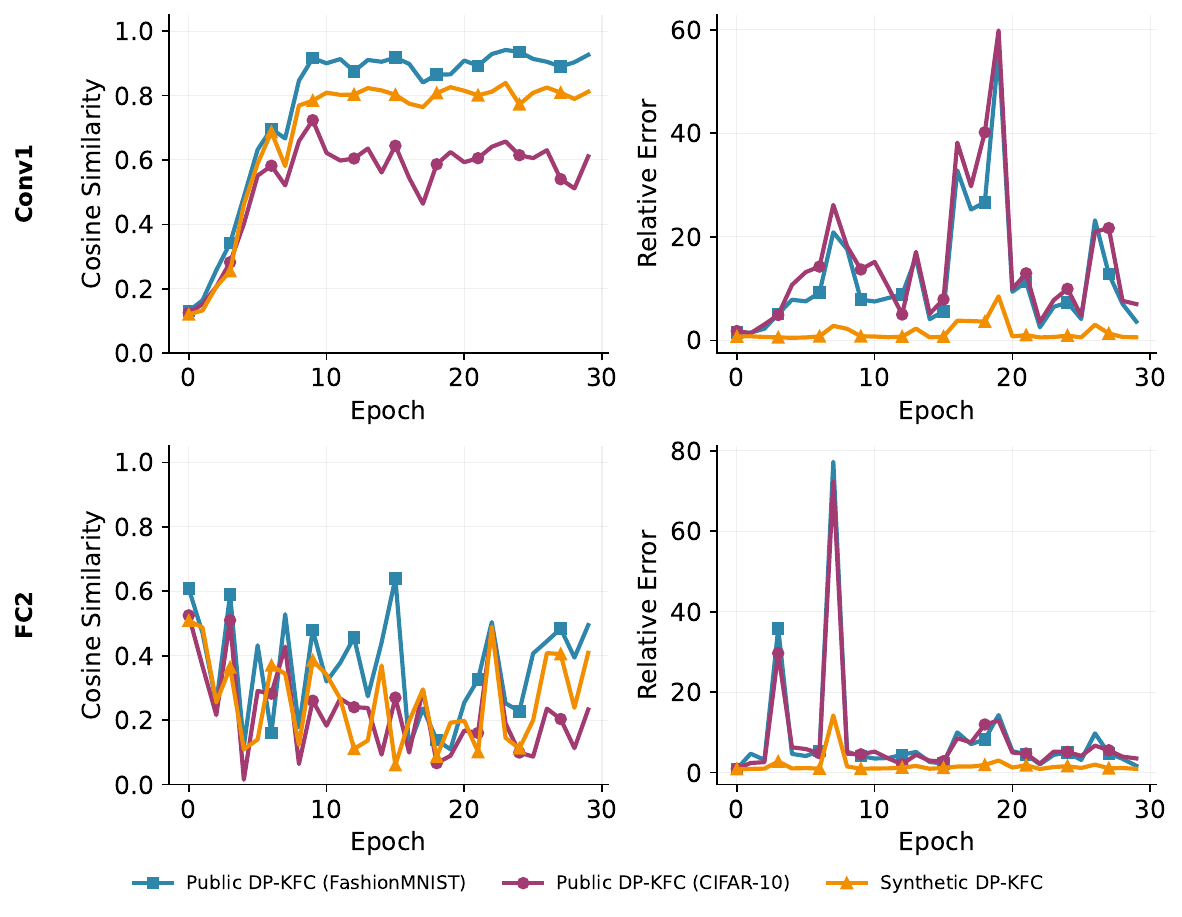}
    \caption{\textbf{Similarity of Public and Synthetic Preconditioning with Oracle.} (a) Cosine similarity (Direction) and (b) Relative Frobenius norm (Scale) comparing FashionMNIST (Blue), CIFAR-10 (Purple), and Synthetic DP-KFC (Orange) against the private oracle. Top row: First convolutional layer. Bottom row: Deepest layer where directional alignment is low and not properly recovered by any preconditioning method.}
    \label{fig:kfac_cov_track}
\end{figure}

\subsection{Benchmark Performance across Modalities}
\label{subsec:main_benchmarks}

\begin{figure}[t]
    \centering
    \includegraphics[width=\linewidth]{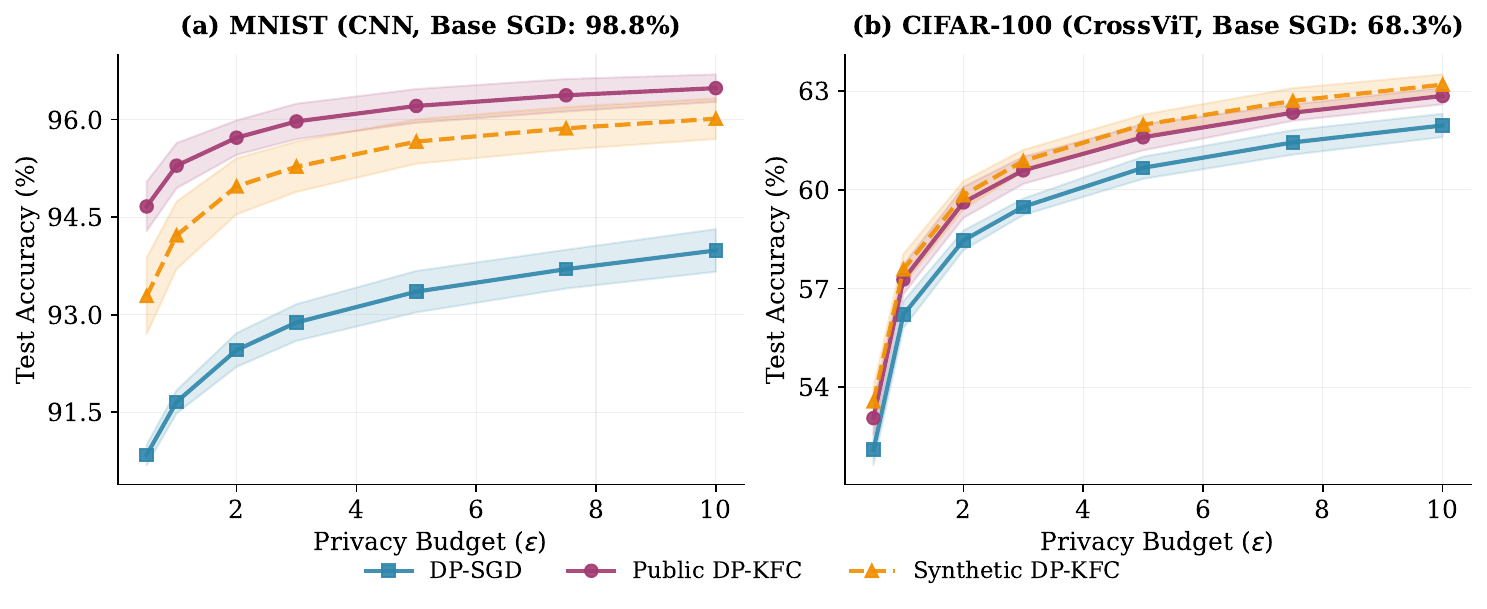}
    \caption{\textbf{Privacy-Utility Trade-off.} Test accuracy vs.\ privacy budget $\epsilon$ for DP-SGD (Blue; DP-Adam for CrossViT), Public DP-KFC (Purple), and Synthetic DP-KFC (Red). Synthetic DP-KFC consistently matches or exceeds public-data baselines without distribution-shift cost. See text for the $\epsilon$-axis convention.}
    \label{fig:dp_methods_dataset}
\end{figure}

\begin{figure}[t]
    \centering
    \includegraphics[width=\linewidth]{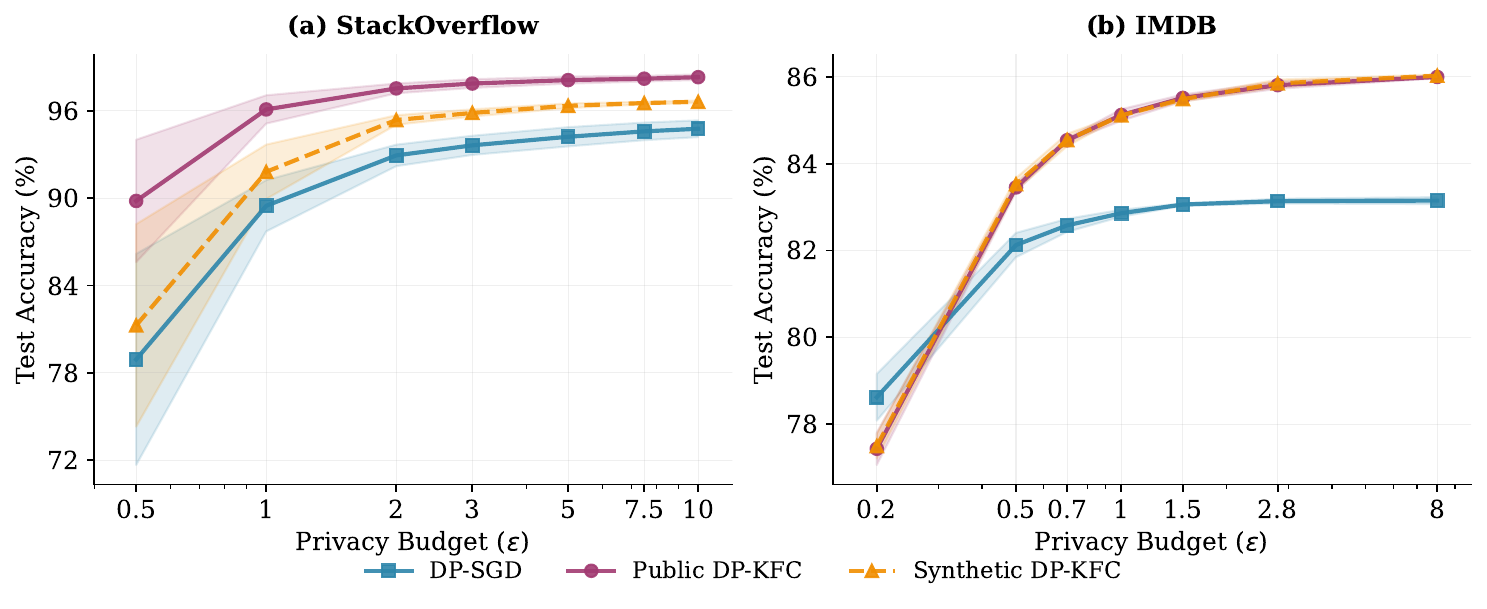}
    \caption{\textbf{NLP Tasks.} Test accuracy vs.\ $\epsilon$ on (a) StackOverflow next-word prediction (BERT) and (b) IMDB sentiment classification (logistic regression); baseline is DP-Adam / DP-SGD respectively, dashed lines the non-private references ($\approx99.0\%$, $\approx88.0\%$). Synthetic DP-KFC (orange) matches public-data preconditioning on IMDB; on StackOverflow it improves over the baseline but trails Public DP-KFC (analyzed below). Same $\epsilon$-axis convention as Fig.~\ref{fig:dp_methods_dataset}.}
    \label{fig:dp_methods_dataset_stackov}
    
\end{figure}

We compare Synthetic DP-KFC against two baselines: (i) standard DP-SGD, representing first-order private optimization, and (ii) Public DP-KFC, which constructs preconditioners from public proxy data. Results span four tasks across vision (MNIST, CIFAR-100) and language (StackOverflow, IMDB) modalities, with privacy budgets $\epsilon \in [0.5, 10.0]$ (Figs.~\ref{fig:dp_methods_dataset}--\ref{fig:dp_methods_dataset_stackov}). In these figures each marker is a \emph{separate} model whose target $\epsilon$ is the end-of-training budget: the noise multiplier $\sigma$ is calibrated by RDP accounting over the $T$ steps, and all hyperparameters are tuned per method and per $(\text{dataset},\epsilon)$ pair; non-DP references appear as dashed lines. Full experimental details and per-$\epsilon$ results are in Appendix~\ref{app:table_centralized}.

\textbf{Computer Vision.} Both Public and Synthetic DP-KFC consistently outperform DP-SGD across privacy regimes. On MNIST at $\epsilon=1.0$, Synthetic DP-KFC achieves 94.2\% versus 91.7\% for DP-SGD (+2.5\%). On CIFAR-100 with CrossViT, Synthetic DP-KFC matches Public DP-KFC across all $\epsilon$ values, demonstrating that architecture alone can be used to estimate the preconditioning matrix even for attention-based models without public data.

\textbf{Natural Language.} Fig.~\ref{fig:dp_methods_dataset_stackov} confirms the trend extends to NLP. On StackOverflow with BERT, Synthetic DP-KFC reaches 91.8\% at $\epsilon=1.0$ compared to 89.5\% for DP-SGD, despite using semantically meaningless token probes. On IMDB with logistic regression, both Public and Synthetic DP-KFC achieve 85.8\% versus 83.1\% for DP-SGD at $\epsilon=2.8$.

\textbf{When the data manifold matters.} Gains are largest when training from scratch (MNIST $+2.5\%$, IMDB $+2.7\%$), where preconditioning keeps low-sensitivity layers from drowning in noise; in frozen-backbone fine-tuning (CIFAR-100 $\approx\!{+}1.4\%$) the pre-trained features are already well conditioned \cite{DBLP:journals/corr/MartensG15, karakida2019universal}, leaving less to correct. The synthetic prior also matches public data exactly on vision, since natural images obey $1/f^\alpha$ spectra \cite{field1987relations} and pink noise lands near their activation statistics, but not on language: real text occupies a low-dimensional manifold in the embedding space, so random token probes recover the architectural factors yet land off it, leaving a gap on StackOverflow, where Public DP-KFC reaches $96.1\%$ at $\epsilon{=}1$ (Appendix~\ref{app:table_centralized}). Closing this, e.g.\ with token-frequency priors or embedding-space probes, is left to future work.

\textbf{Standalone and complementary.} DP-KFC follows the \emph{scale-then-privatize} (STP) principle \cite{ganesh2025designprinciplesprivateadaptive}, extending it from diagonal \cite{Li2022, li2023delayed} to block-diagonal KFAC geometry at zero privacy cost. Among data-free methods on MNIST it beats both that STP baseline (AdaDPS) and post-privatization methods acting after the noise, namely DP-AdamBC \cite{tang2024dpadambc} (bias correction) and DiSK \cite{zhang2025disk} (Kalman denoising); see Table~\ref{tab:baseline_compare}. DP-KFC with public data improves further. The two families are complementary: \emph{Synthetic DP-KFC $+$ DP-AdamBC} beats either alone (e.g.\ $95.5\%$ vs.\ $94.2\%$ and $94.0\%$ at $\epsilon{=}1$), a direction we leave open. Details in Appendix~\ref{app:baseline_compare}.

\begin{table}[t]
\centering
\caption{\textbf{Comparison with private optimizers} on MNIST (CNN, test accuracy \%, 5 seeds, hyperparameters tuned per method). Top block: standalone optimizers (Public DP-KFC additionally requires a public proxy). Bottom: pre-privatization STP combined with post-privatization bias correction. Bold marks the best entry per column. Full $\epsilon$ range in Appendix~\ref{app:baseline_compare}.}
\label{tab:baseline_compare}
\renewcommand{\arraystretch}{1.15}
\small
\setlength{\tabcolsep}{4pt}
\begin{tabular}{lccc}
\toprule
\textbf{Method} & $\epsilon{=}1$ & $\epsilon{=}2$ & $\epsilon{=}8$ \\
\midrule
DP-SGD & 91.7{\scriptsize$\pm$0.2} & 92.5{\scriptsize$\pm$0.3} & 93.7{\scriptsize$\pm$0.3} \\
AdaDPS \cite{Li2022} & 91.3{\scriptsize$\pm$0.8} & 93.2{\scriptsize$\pm$1.0} & 93.3{\scriptsize$\pm$1.4} \\
DiSK \cite{zhang2025disk} & 93.7{\scriptsize$\pm$0.4} & 94.1{\scriptsize$\pm$0.3} & 94.3{\scriptsize$\pm$0.2} \\
DP-AdamBC \cite{tang2024dpadambc} & 94.0{\scriptsize$\pm$0.3} & 94.8{\scriptsize$\pm$0.2} & 95.3{\scriptsize$\pm$0.1} \\
Public DP-KFC & 95.3{\scriptsize$\pm$0.4} & 95.7{\scriptsize$\pm$0.3} & 96.4{\scriptsize$\pm$0.3} \\
\textbf{Synthetic DP-KFC} & 94.2{\scriptsize$\pm$0.5} & 95.0{\scriptsize$\pm$0.4} & 95.9{\scriptsize$\pm$0.3} \\
\midrule
\textbf{\shortstack[l]{Synthetic DP-KFC\\ $+$ DP-AdamBC}} & \textbf{95.5}{\scriptsize$\pm$0.3} & \textbf{96.1}{\scriptsize$\pm$0.2} & \textbf{96.4}{\scriptsize$\pm$0.3} \\
\bottomrule
\end{tabular}
\end{table}

\subsection{Transfer and Domain Mismatch}
\label{subsec:hierarchy}

To quantify robustness against distribution shift, we evaluate across two transfer regimes representing different levels of domain mismatch between the proxy (source) and the private task (target). We compare against AdaDPS \cite{Li2022}, which constructs adaptive preconditioners for DP optimization. We include two variants: \emph{(i)} AdaDPS (Public) using MNIST as proxy, and \emph{(ii)} AdaDPS (Pink) replacing public data with synthetic pink noise to isolate the preconditioning mechanism from the data source.

\emph{(i) Ideal Alignment} (Fashion $\leftarrow$ MNIST): Source and target share high structural similarity, centered objects, black backgrounds, and equal resolution. Both datasets contain single-channel grayscale images with similar spatial statistics, making MNIST an informative geometric proxy for FashionMNIST. 

\emph{(ii) Texture Disjoint} (Path $\leftarrow$ MNIST): A challenging scenario where object-centric priors (handwritten digits) conflict with texture-centric medical histology tasks. PathMNIST contains colorectal cancer tissue patches with fundamentally different visual statistics, evaluating susceptibility to \emph{Negative Transfer}.

Table~\ref{tab:dynamic_summary} summarizes the results across these scenarios. The \emph{Oracle (Private)} baseline computes the KFAC preconditioner directly from private data, representing the upper bound achievable with unlimited privacy budget for curvature estimation.

\begin{table}[t]
\centering
\caption{\textbf{Transfer Robustness.} Test accuracy (\%) under domain mismatch at $\epsilon=1.0$. Oracle uses private data.}
\label{tab:dynamic_summary}
\renewcommand{\arraystretch}{1.2}
\small
\begin{tabular}{lcc}
\toprule
\textbf{Method} & \textbf{Ideal Align.} & \textbf{Texture Disj.} \\
& (Fashion$\leftarrow$MNIST) & (Path$\leftarrow$MNIST) \\
\midrule
Oracle (Private) & 88.3{\scriptsize$\pm$0.2} & 78.4{\scriptsize$\pm$1.7} \\
\midrule
DP-SGD & 83.5{\scriptsize$\pm$0.7} & 68.5{\scriptsize$\pm$2.3} \\
AdaDPS (Public) & 84.7{\scriptsize$\pm$0.3} & 70.5{\scriptsize$\pm$2.0} \\
AdaDPS (Pink) & 84.2{\scriptsize$\pm$0.5} & 71.2{\scriptsize$\pm$1.9} \\
Public DP-KFC & 87.6{\scriptsize$\pm$0.2} & 73.4{\scriptsize$\pm$1.3} \\
\textbf{Synthetic DP-KFC} & \textbf{87.8}{\scriptsize$\pm$0.2} & \textbf{78.2}{\scriptsize$\pm$1.9} \\
\bottomrule
\end{tabular}
\end{table}

\textbf{Negative Transfer.} Synthetic DP-KFC emerges as a robust geometric prior across both transfer regimes. Under \textit{Ideal Alignment}, it nearly matches the Private Oracle ($87.8\%$ vs $88.3\%$), outperforming both DP-SGD ($83.5\%$) and AdaDPS variants ($84.7\%$, $84.2\%$). More importantly, under \textit{Texture Disjoint}, where the MNIST proxy's object-centric statistics conflict with PathMNIST's texture patterns, Synthetic DP-KFC achieves $78.2\%$, matching the Oracle while the Public DP-KFC degrades to $73.4\%$. This $4.8\%$ gap demonstrates that synthetic noise captures task-agnostic geometric structure (e.g., spatial correlations, layer-wise scaling) without inheriting domain-specific biases that cause negative transfer. The consistent performance suggests synthetic probing acts as implicit regularization, capturing global architectural properties while avoiding high-variance batch-specific estimates (Appendix~\ref{app:cov_tracking}).

\section{Conclusion}
This work challenges the assumption that estimating the Fisher Information Matrix requires semantic data: gradient \emph{scale} is an intrinsic architectural property while eigenvector \emph{directions} are data-dependent, so the curvature needed for preconditioning is recoverable from synthetic noise, with no public data and no privacy budget. This makes private model development feasible in specialized domains (e.g., medical imaging) that lack public proxies, and lets federated clients build local preconditioners without communicating second-order statistics.

DP-KFC incurs the usual KFAC overhead ($\approx\!2.2\times$ lower sample throughput; Appendix~\ref{app:complexity}), typically offset by faster convergence within DP's fixed step budget (Remark~\ref{rem:privacy_wall}). Directional alignment degrades in deep layers, and on NLP synthetic token probes miss the low-dimensional text manifold, leaving a gap to public-data preconditioning (Section~\ref{subsec:main_benchmarks}). Future work includes low-rank approximations, manifold-aware synthetic generators for language, combining pre- and post-privatization corrections (e.g., with DP-AdamBC \cite{tang2024dpadambc}), and extending the \emph{scale-then-privatize} paradigm to other preconditioned optimizers such as Muon and Shampoo \cite{jordan2024muon, gupta2018shampoopreconditionedstochastictensor}.

\clearpage 

\section*{Impact Statement}

This work aims to advance privacy-preserving machine learning by enabling second order optimization methods under differential privacy without requiring auxiliary public data. Removing the dependence on public proxies, our method expands the applicability of private deep learning to sensitive domains, such as medical imaging and financial data, where suitable public data is unavailable or itself contains private information, or labels are time consuming. We do not foresee specific negative societal consequences beyond those inherent to advances in machine learning broadly. The privacy guarantees provided by differential privacy remain formally intact under our method, thus pushing the privacy-utility frontier.

\section*{Acknowledgements}

This project is supported by the Innovative Health Initiative Joint Undertaking and its members under grant agreement No.\ 101172825, and by the CAFEIN\textregistered{} research and development fund. Funded by the European Union, the private members, and those contributing partners of the IHI JU. Views and opinions expressed are however those of the author(s) only and do not necessarily reflect those of the aforementioned parties. Neither of the aforementioned parties can be held responsible for them. We further acknowledge the support of the CERN Quantum Technology Initiative (QTI). This work was supported by the Mar\'ia de Maeztu Units of Excellence Programme (CEX2021-001195-M), funded by the Spanish Government (MICIU/AEI/10.13039/501100011033), and by the European Union under the ERC Synergy Grant \emph{Zee-Zoom-Zap} (grant no.\ 101224844). Views and opinions expressed are however those of the author(s) only and do not necessarily reflect those of the European Union or the European Research Council Executive Agency; neither the European Union nor the granting authority can be held responsible for them.

\clearpage

\bibliographystyle{icml2026}
\bibliography{sample-base}

\clearpage

\onecolumn

\appendix

\section{Convergence Analysis}
\label{app:convergence_proof}
\index{Convergence Analysis}
In this section, we provide the complete proof for the convergence of DP-KFC to a stationary point for non-convex objectives. We begin by defining the notation and probabilistic setup used throughout the analysis.

\subsection{Notation and Problem Setup}
\index{Notation}

We consider the optimization of a differentiable, non-convex loss function $\mathcal{L}(\theta): \mathbb{R}^d \to \mathbb{R}$. We denote the optimal loss as $\mathcal{L}^* = \inf_\theta \mathcal{L}(\theta) > -\infty$.

At each iteration $t$, the algorithm performs the following update:
\begin{equation}
    \theta_{t+1} = \theta_t - \eta_t \left( P_t \cdot \bar{g}_t + \xi_t \right)
\end{equation}

Table \ref{tab:notation} summarizes the random variables and constants involved in this process.

\begin{table}[ht]
    \centering
    \caption{Summary of Notation for Convergence Analysis}
    \label{tab:notation}
    \renewcommand{\arraystretch}{1.3}
    \begin{tabular}{c|l|l}
        \hline
        \textbf{Symbol} & \textbf{Description} & \textbf{Properties / Assumptions} \\
        \hline
        $\theta_t$ & Model parameters at step $t$ & $\theta_t \in \mathbb{R}^d$ \\
        $\mathcal{L}(\theta)$ & Loss function & $L$-smooth (Assumption \ref{ass:smoothness}) \\
        $\bar{g}_t$ & Stochastic gradient estimate & Unbiased: $\mathbb{E}[\bar{g}_t] = \nabla \mathcal{L}(\theta_t)$ \\
         & & Bounded Variance: $\mathbb{E}[\|\bar{g}_t - \nabla \mathcal{L}(\theta_t)\|^2] \le \sigma_{sgd}^2$ \\
        $P_t$ & Preconditioner Matrix & Symmetric Positive Definite ($S_{++}^d$) \\
         & & Spectrum bounded: $\lambda_{min} I \preceq P_t \preceq \lambda_{max} I$ \\
        $\xi_t$ & Privacy Noise Vector & $\xi_t \sim \mathcal{N}(0, \sigma^2 C^2 I)$ \\
         & & $\mathbb{E}[\xi_t] = 0, \quad \mathbb{E}[\|\xi_t\|^2] = d \sigma^2 C^2$ \\
        $\eta_t$ & Learning Rate & Decaying sequence, e.g., $\eta_t = 1/\sqrt{T}$ \\
        \hline
    \end{tabular}
\end{table}

\subsection{Formal Assumptions}
\index{Formal Assumptions}

We restate the assumptions required for the analysis.

\begin{assumption}[$L$-Smoothness]
\label{ass:smoothness}
The objective function $\mathcal{L}$ is $L$-smooth, meaning its gradient is Lipschitz continuous. For all $x, y \in \mathbb{R}^d$:
\begin{equation}
    \|\nabla \mathcal{L}(x) - \nabla \mathcal{L}(y)\| \le L \|x - y\|
\end{equation}
This implies the standard quadratic upper bound (Descent Lemma):
\begin{equation}
    \mathcal{L}(y) \le \mathcal{L}(x) + \langle \nabla \mathcal{L}(x), y - x \rangle + \frac{L}{2} \|y - x\|^2
\end{equation}
\end{assumption}

\begin{assumption}[Independence of Noise]
\label{ass:independence}
The privacy noise $\xi_t$ is drawn independently at each step and is independent of the stochastic gradient estimate $\bar{g}_t$. The preconditioner $P_t$ is computed using side information (public data or synthetic noise) and is therefore statistically independent of the current private batch gradients $\bar{g}_t$ and privacy noise $\xi_t$.
\end{assumption}

\subsection{Derivation}
\index{Derivation}

We proceed by analyzing the expected decrease in loss for a single iteration.

\begin{lemma}[One-Step Descent Bound]
\label{lemma:onestep}
Under Assumptions \ref{ass:smoothness} and \ref{ass:independence}, the expected loss at iteration $t+1$, conditioned on the parameters $\theta_t$, satisfies:
\begin{equation}
    \mathbb{E}[\mathcal{L}(\theta_{t+1}) \mid \theta_t] \le \mathcal{L}(\theta_t) - \eta_t \lambda_{min} \|\nabla \mathcal{L}(\theta_t)\|^2 + \frac{L \eta_t^2}{2} \lambda_{max}^2 \left( \|\nabla \mathcal{L}(\theta_t)\|^2 + \sigma_{sgd}^2 \right) + \frac{L \eta_t^2}{2} d \sigma^2 C^2
\end{equation}
\end{lemma}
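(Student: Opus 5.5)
We apply the Descent Lemma of Assumption~\ref{ass:smoothness} with $x=\theta_t$ and $y=\theta_{t+1}=\theta_t-\eta_t(P_t\bar g_t+\xi_t)$. This gives
\begin{equation*}
\mathcal{L}(\theta_{t+1}) \le \mathcal{L}(\theta_t) - \eta_t\langle \nabla\mathcal{L}(\theta_t), P_t\bar g_t+\xi_t\rangle + \frac{L\eta_t^2}{2}\|P_t\bar g_t+\xi_t\|^2 .
\end{equation*}
We then take expectations conditional on $\theta_t$. By Assumption~\ref{ass:independence}, $P_t$ is fixed given $\theta_t$: it is built from synthetic noise and the current parameters, and we also condition on the synthetic probe draw. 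The privacy noise $\xi_t$ is independent of $\bar g_t$ and has zero mean. The first-order term therefore reduces to $-\eta_t\langle\nabla\mathcal{L}(\theta_t),P_t\nabla\mathcal{L}(\theta_t)\rangle$, using unbiasedness of $\bar g_t$. Since $P_t\succeq\lambda_{min}I$, this term is at most $-\eta_t\lambda_{min}\|\nabla\mathcal{L}(\theta_t)\|^2$.

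For the quadratic term, we expand $\|P_t\bar g_t+\xi_t\|^2$. The cross term $2\langle P_t\bar g_t,\xi_t\rangle$ vanishes in expectation by independence and $\mathbb{E}[\xi_t]=0$. The noise term contributes $\mathbb{E}\|\xi_t\|^2=d\sigma^2C^2$, which gives the last summand exactly, with no $\lambda_{max}$ factor. For the remaining piece we use $P_t\preceq\lambda_{max}I$ to bound $\|P_t\bar g_t\|^2\le\lambda_{max}^2\|\bar g_t\|^2$. We then apply the bias-variance decomposition $\mathbb{E}\|\bar g_t\|^2=\|\nabla\mathcal{L}(\theta_t)\|^2+\mathbb{E}\|\bar g_t-\nabla\mathcal{L}(\theta_t)\|^2\le\|\nabla\mathcal{L}(\theta_t)\|^2+\sigma_{sgd}^2$. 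The decomposition's own cross term vanishes by unbiasedness. Collecting terms yields the stated inequality.

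The main obstacle is justifying that $\bar g_t$ is unbiased with bounded variance, given that $\bar g_t$ is the clipped, batch-averaged gradient. Clipping generally introduces bias. We will take the table of notation and Assumption~\ref{ass:bounded} at face value, treating $\bar g_t$ as an unbiased oracle for $\nabla\mathcal{L}(\theta_t)$. We will remark that if clipping is active, the bound holds with an additional bias term. A related bookkeeping point concerns the normalisation of the noise. The update as written applies noise $\xi_t$ of variance $\sigma^2C^2$ directly, whereas Algorithm~\ref{alg:dp_kfac_loop} divides by $B$. We will state the lemma with $\xi_t$ as in the table, and rescale the noise to $d\sigma^2C^2/B^2$ afterwards, when passing to Theorem~\ref{thm:convergence}.
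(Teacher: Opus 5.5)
Your proposal is correct and follows the paper's proof essentially step for step: the descent lemma, unbiasedness together with $P_t \succeq \lambda_{min} I$ for the linear term, the cross term vanishing by independence and $\mathbb{E}[\xi_t]=0$, and $\|P_t\|_2 \le \lambda_{max}$ with the bias--variance decomposition for the quadratic term. Your caveats are legitimate points about the model rather than gaps in the lemma: clipping bias (in Algorithm~\ref{alg:dp_kfac_loop} clipping is even applied after $P_t$), and the $1/B$ noise scaling used later in Theorem~\ref{thm:convergence}; the paper's proof sidesteps both by taking the notation-table assumptions at face value, exactly as you propose.
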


\begin{proof}
Let the update step be $\Delta_t = -\eta_t (P_t \bar{g}_t + \xi_t)$. Applying the $L$-smoothness inequality:
\begin{equation}
    \mathcal{L}(\theta_{t+1}) \le \mathcal{L}(\theta_t) + \langle \nabla \mathcal{L}(\theta_t), \Delta_t \rangle + \frac{L}{2} \|\Delta_t\|^2
\end{equation}
We take the expectation with respect to the stochasticity at step $t$ ($\bar{g}_t$ and $\xi_t$).

\textbf{1. Linear Term (Descent):}
\begin{align}
    \mathbb{E}[\langle \nabla \mathcal{L}(\theta_t), \Delta_t \rangle] &= \mathbb{E}\left[ \langle \nabla \mathcal{L}(\theta_t), -\eta_t (P_t \bar{g}_t + \xi_t) \rangle \right] \\
    &= -\eta_t \langle \nabla \mathcal{L}(\theta_t), P_t \mathbb{E}[\bar{g}_t] \rangle - \eta_t \langle \nabla \mathcal{L}(\theta_t), \mathbb{E}[\xi_t] \rangle \\
    &= -\eta_t \nabla \mathcal{L}(\theta_t)^\top P_t \nabla \mathcal{L}(\theta_t) \quad \text{(Since } \mathbb{E}[\bar{g}_t] = \nabla \mathcal{L}, \mathbb{E}[\xi_t] = 0 \text{)}
\end{align}

We use the spectral property of positive definite matrices (Rayleigh quotient): for any vector $v$, $v^\top P_t v \ge \lambda_{min}(P_t) \|v\|^2$. Multiplying by $-\eta_t < 0$ reverses the inequality:
\begin{equation}
    -\eta_t \nabla \mathcal{L}(\theta_t)^\top P_t \nabla \mathcal{L}(\theta_t) \le -\eta_t \lambda_{min} \|\nabla \mathcal{L}(\theta_t)\|^2
\end{equation}
This term guarantees that the preconditioner provides a valid descent direction at least proportional to its smallest eigenvalue.

\textbf{2. Quadratic Term (Penalty):}
We expand the squared norm of the update $\Delta_t = -\eta_t (P_t \bar{g}_t + \xi_t)$:
\begin{align}
    \mathbb{E}[\|\Delta_t\|^2] &= \eta_t^2 \mathbb{E}\left[ \|P_t \bar{g}_t + \xi_t\|^2 \right] \\
    &= \eta_t^2 \left( \mathbb{E}[\|P_t \bar{g}_t\|^2] + \mathbb{E}[\|\xi_t\|^2] + 2\mathbb{E}[\langle P_t \bar{g}_t, \xi_t \rangle] \right)
\end{align}
We analyze the cross-term $\mathbb{E}[\langle P_t \bar{g}_t, \xi_t \rangle]$. By Assumption \ref{ass:independence}, the privacy noise $\xi_t$ is statistically independent of the gradient estimate $\bar{g}_t$ and the fixed preconditioner $P_t$. Therefore, the expectation factorizes:
\begin{equation}
    \mathbb{E}[\langle P_t \bar{g}_t, \xi_t \rangle] = \langle \mathbb{E}[P_t \bar{g}_t], \mathbb{E}[\xi_t] \rangle = \langle \mathbb{E}[P_t \bar{g}_t], \mathbf{0} \rangle = 0
\end{equation}
With the cross-term eliminated, we bound the remaining terms.
For the gradient term, we use the spectral upper bound $\|P_t\|_2 \le \lambda_{max}$ and the variance decomposition $\mathbb{E}[X^2] = (\mathbb{E}X)^2 + \text{Var}(X)$:
\begin{align}
    \mathbb{E}[\|P_t \bar{g}_t\|^2] &\le \lambda_{max}^2 \mathbb{E}[\|\bar{g}_t\|^2] \\
    &= \lambda_{max}^2 \left( \|\mathbb{E}[\bar{g}_t]\|^2 + \mathbb{E}[\|\bar{g}_t - \mathbb{E}[\bar{g}_t]\|^2] \right) \\
    &\le \lambda_{max}^2 \left( \|\nabla \mathcal{L}(\theta_t)\|^2 + \sigma_{sgd}^2 \right)
\end{align}
For the noise term, using the definition of the multivariate Gaussian variance:
\begin{equation}
    \mathbb{E}[\|\xi_t\|^2] = \text{Tr}(\text{Cov}(\xi_t)) + \|\mathbb{E}[\xi_t]\|^2 = \text{Tr}(\sigma^2 C^2 I_d) = d \sigma^2 C^2
\end{equation}
Substituting these back into the expansion yields the final quadratic bound.

\end{proof}

\subsection{Proof of Theorem \ref{thm:convergence}}

We proceed by summing the result of Lemma \ref{lemma:onestep} over iterations $t = 0, \dots, T-1$. Let $\eta_t = \eta$ be constant for simplicity. Rearranging Lemma \ref{lemma:onestep} to isolate the gradient norm:
\begin{equation}
    \eta \lambda_{min} \|\nabla \mathcal{L}_t\|^2 \le \mathcal{L}(\theta_t) - \mathbb{E}[\mathcal{L}(\theta_{t+1})] + \frac{L \eta^2}{2} M
\end{equation}
where $M = \lambda_{max}^2 (\sigma_{sgd}^2 + \|\nabla \mathcal{L}_t\|^2) + d \sigma^2 C^2$ is the noise/penalty constant.

Summing over $t$:
\begin{equation}
    \eta \lambda_{min} \sum_{t=0}^{T-1} \mathbb{E}[\|\nabla \mathcal{L}_t\|^2] \le \sum_{t=0}^{T-1} \left( \mathcal{L}(\theta_t) - \mathbb{E}[\mathcal{L}(\theta_{t+1})] \right) + \sum_{t=0}^{T-1} \frac{L \eta^2}{2} M
\end{equation}

The first term on the RHS is a \textbf{telescoping sum}. The intermediate terms cancel out:
\begin{equation}
    \sum_{t=0}^{T-1} (\mathcal{L}_t - \mathcal{L}_{t+1}) = \mathcal{L}_0 - \mathcal{L}_T \le \mathcal{L}_0 - \mathcal{L}^*
\end{equation}
where $\mathcal{L}^*$ is the global minimum. Substituting this back:
\begin{equation}
    \eta \lambda_{min} \sum_{t=0}^{T-1} \mathbb{E}[\|\nabla \mathcal{L}_t\|^2] \le (\mathcal{L}_0 - \mathcal{L}^*) + T \frac{L \eta^2}{2} M
\end{equation}

Dividing both sides by $T \eta \lambda_{min}$, we obtain a bound on the \textit{average} squared gradient norm:
\begin{equation}
    \frac{1}{T} \sum_{t=0}^{T-1} \mathbb{E}[\|\nabla \mathcal{L}_t\|^2] \le \frac{\mathcal{L}_0 - \mathcal{L}^*}{T \eta \lambda_{min}} + \frac{L \eta M}{2 \lambda_{min}}
\end{equation}
Since the minimum value in a sequence is always lower than or equal to the average ($\min_t x_t \le \frac{1}{T}\sum x_t$), we have:
\begin{equation}
    \min_{t \in [0, T-1]} \mathbb{E}[\|\nabla \mathcal{L}_t\|^2] \le \underbrace{\frac{\mathcal{L}_0 - \mathcal{L}^*}{T \eta \lambda_{min}}}_{\text{(A) Optimization Error}} + \underbrace{\frac{L \eta M}{2 \lambda_{min}}}_{\text{(B) Noise Variance}}
\end{equation}
To minimize the RHS, we must balance term (A), which decreases with $\eta$, and term (B), which increases with $\eta$. We choose the decaying step size schedule $\eta = \frac{1}{\sqrt{T}}$. Substituting this into the bound:

\textbf{1. Optimization Term (A):}
\begin{equation}
    \frac{\mathcal{L}_0 - \mathcal{L}^*}{T (1/\sqrt{T}) \lambda_{min}} = \frac{\mathcal{L}_0 - \mathcal{L}^*}{\sqrt{T} \lambda_{min}} = \mathcal{O}\left(\frac{1}{\sqrt{T}}\right)
\end{equation}

\textbf{2. Noise Variance Term (B):}
\begin{equation}
    \frac{L (1/\sqrt{T}) M}{2 \lambda_{min}} = \frac{L M}{2 \lambda_{min} \sqrt{T}} = \mathcal{O}\left(\frac{1}{\sqrt{T}}\right)
\end{equation}

Since both terms decay at the same rate, the total convergence rate is $\mathcal{O}(1/\sqrt{T})$. This confirms that the algorithm converges to a stationary point despite the injected privacy noise.
\qed

\section{Theoretical Preconditioned Geometry}
\label{app:proof_isotropy}
\index{Proofs!Isotropy}

In this section, we rigorously establish that the preconditioning transformation derived from the KFAC factors successfully standardizes the geometry of the gradient distribution. Our goal is to prove that if the preconditioner accurately captures the second-moment matrix of the gradients, the transformed gradients become isotropic.

\begin{proposition}[Isotropy of Preconditioned Gradients]
\label{prop:isotropy}
Let $g \in \mathbb{R}^d$ be a stochastic gradient vector with second moment matrix $F = \mathbb{E}[g g^\top]$. Consider the preconditioning transformation $P = F^{-1/2}$. The transformed gradient $\tilde{g} = P g$ satisfies the isotropy condition $\mathbb{E}[\tilde{g} \tilde{g}^\top] = I_d$, and its expected squared norm is $\mathbb{E}[\|\tilde{g}\|^2] = d$.
\end{proposition}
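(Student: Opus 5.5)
The argument rests on linearity of expectation together with the symmetry of $F^{-1/2}$. The one genuine prerequisite is that $F^{-1/2}$ exists. I will first note that $F=\mathbb{E}[gg^\top]$ is symmetric positive semidefinite, since $v^\top F v=\mathbb{E}[(v^\top g)^2]\ge 0$. For $F^{-1/2}$ to be meaningful, $F$ must be invertible, i.e.\ strictly positive definite. This is implicitly assumed by the statement. In the algorithmic setting it is also enforced by the damping terms $\pi I$ and $\gamma I$ of Algorithm~\ref{alg:data_free_precond}, as in Assumption~\ref{ass:bounded_pre}. I expect this well-definedness point to be the only real obstacle. I will handle it by stating explicitly that $F\succ 0$ is assumed, and by remarking that if $F$ is singular the same argument goes through with the pseudo-inverse square root, giving the projector onto $\mathrm{range}(F)$ in place of $I_d$.

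Given $F\succ 0$, I will write its spectral decomposition $F=Q\Lambda Q^\top$, where $Q$ is orthogonal and $\Lambda$ is diagonal with positive entries. I then define $F^{-1/2}=Q\Lambda^{-1/2}Q^\top$. This matrix is symmetric and satisfies $F^{-1/2}F F^{-1/2}=Q\Lambda^{-1/2}\Lambda\Lambda^{-1/2}Q^\top=I_d$.

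For the isotropy claim, $P$ is deterministic (it depends only on $F$, not on the realization of $g$), so it can be pulled out of the expectation:
\[
\mathbb{E}[\tilde g\tilde g^\top]=P\,\mathbb{E}[gg^\top]\,P^\top=F^{-1/2}FF^{-1/2}=I_d,
\]
where the last step uses $P^\top=P$.

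For the norm claim, I will use $\|\tilde g\|^2=\mathrm{Tr}(\tilde g\tilde g^\top)$ and the linearity of trace and expectation:
\[
\mathbb{E}\|\tilde g\|^2=\mathrm{Tr}\,\mathbb{E}[\tilde g\tilde g^\top]=\mathrm{Tr}(I_d)=d.
\]
An equivalent route is cyclicity: $\mathbb{E}[g^\top F^{-1}g]=\mathrm{Tr}(F^{-1}F)=d$.

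Finally, I will briefly connect this to DP-SGD. The isotropic Gaussian noise $\mathcal{N}(0,\sigma^2C^2I)$ has the same covariance shape as the transformed gradients. Their expected squared norm $d$ is spread evenly across directions, which is the geometric matching claimed in the preliminaries.
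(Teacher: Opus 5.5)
Your proof is correct and follows the same route as the paper: you pull the deterministic symmetric matrix $F^{-1/2}$ out of the expectation to get $F^{-1/2}FF^{-1/2}=I_d$, then take the trace to obtain $d$. Your explicit requirement $F\succ 0$ (with the pseudo-inverse caveat) is a useful sharpening, since the paper only asserts that $F$ is positive semidefinite, which by itself does not guarantee that $F^{-1/2}$ exists.
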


\begin{proof}
We first derive the covariance structure of the transformed gradient. By definition of the covariance matrix and substituting the linear transformation $\tilde{g} = F^{-1/2} g$, we have:
\begin{equation}
    \text{Cov}(\tilde{g}) = \mathbb{E}[\tilde{g} \tilde{g}^\top] = \mathbb{E}\left[ (F^{-1/2} g) (F^{-1/2} g)^\top \right].
\end{equation}
Using the property that the matrix transpose distributes over the product in reverse order, $(F^{-1/2} g)^\top = g^\top (F^{-1/2})^\top$. Since $F$ is a symmetric positive semi-definite matrix, its inverse square root $F^{-1/2}$ is also symmetric, implying $(F^{-1/2})^\top = F^{-1/2}$. Since $F^{-1/2}$ is a constant matrix with respect to the expectation over the data distribution, we can extract it from the expectation operator:
\begin{equation}
    \mathbb{E}[\tilde{g} \tilde{g}^\top] = F^{-1/2} \mathbb{E}[g g^\top] F^{-1/2}.
\end{equation}
Substituting the definition of the second moment matrix $F = \mathbb{E}[g g^\top]$ into the equation yields:
\begin{equation}
    \text{Cov}(\tilde{g}) = F^{-1/2} F F^{-1/2} = F^{-1/2} (F^{1/2} F^{1/2}) F^{-1/2} = I_d.
\end{equation}
This confirms that the covariance of the preconditioned gradient is the identity matrix $I_d$, satisfying the definition of isotropy.

We now establish the scaling of the gradient norm. The expected squared $L_2$ norm is the trace of the second moment matrix. Using the linearity of the trace operator:
\begin{equation}
    \mathbb{E}[\|\tilde{g}\|^2] = \mathbb{E}[\text{Tr}(\tilde{g} \tilde{g}^\top)] = \text{Tr}(\mathbb{E}[\tilde{g} \tilde{g}^\top]).
\end{equation}
Substituting the result $\mathbb{E}[\tilde{g} \tilde{g}^\top] = I_d$ derived above:
\begin{equation}
    \mathbb{E}[\|\tilde{g}\|^2] = \text{Tr}(I_d) = d.
\end{equation}
Thus, preconditioning standardizes the gradient distribution such that the variance is uniformly distributed across all $d$ dimensions.
\end{proof}

\begin{remark}[Choice of Preconditioning Transformation]
It is important to distinguish our gradient preconditioning ($P=F^{-1/2}$) from the full Natural Gradient update ($P=F^{-1}$). If we were to use the full inverse, the covariance of the transformed gradient would be:
\begin{equation}
    \text{Cov}(F^{-1}g) = F^{-1} F F^{-1} = F^{-1}.
\end{equation}
This would invert the geometry rather than normalizing it to the identity sphere. By using the inverse square root $F^{-1/2}$, we ensure the preconditioned gradients are isotropic, so the spherical privacy noise matches the geometry of the gradients.
\end{remark}

\clearpage

\section{Privacy Guarantees of Data-Free Preconditioning}
\label{app:privacy_proof}
\index{Proofs!Privacy}

A central claim of our work is that the introduction of the preconditioner $P_t$ does not degrade the privacy guarantees of the standard Gaussian mechanism. This relies on the independence of $P_t$ from the private dataset $\mathcal{D}$. Here, we formally prove that our mechanism satisfies differential privacy by demonstrating that the sensitivity of the update step remains bounded by the clipping norm $C$, regardless of the magnitude of the preconditioner.

\begin{lemma}[Data-Independence of Synthetic Preconditioner]
\label{lemma:data_independence}
Let $\mathcal{A}$ be the model architecture and $\mathcal{S} \sim \mathcal{D}_{noise}$ be a batch of synthetic noise drawn from a public distribution. The preconditioner computation map $\Psi(\mathcal{A}, \mathcal{S}) \to P_t$ satisfies Differential Privacy with $\epsilon=0$ with respect to the private dataset $\mathcal{D}$.
\end{lemma}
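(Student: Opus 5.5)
The plan is to show that the output distribution of $\Psi(\mathcal{A},\mathcal{S})$ is \emph{identical} on any two adjacent datasets $\mathcal{D},\mathcal{D}'$. Then the defining inequality of $(\epsilon,\delta)$-DP holds with $\epsilon=0$ (and $\delta=0$) trivially. The argument has two parts. First, identify every input that enters $\Psi$. Second, check that none of them is a function of $\mathcal{D}$, except through quantities already released by earlier private steps.

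\textbf{Step 1: enumerate the inputs.} Following Algorithm~\ref{alg:data_free_precond}, $P_t$ is a deterministic function of four things:
\begin{itemize}
\item the architecture $\mathcal{A}$, which is fixed before training;
\item the hyperparameters $(M,\pi,\gamma)$, which are public;
\item the synthetic batch $\{\tilde{x}^{(j)},\tilde{y}^{(j)}\}$, drawn from $\mathcal{D}_{noise}$ by the generators of Section~\ref{sec:signal_variance} (pink noise via Eq.~\eqref{eq:pink_noise_gen}, Zipfian token sampling, uniform labels);
\item the current parameters $\theta_t$.
\end{itemize}
The forward and backward passes, the factors $\hat{A}_{l-1},\hat{G}_l$, the eigendecompositions, and the maps $U_{A,l},U_{G,l}$ are all measurable deterministic functions of these inputs. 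So $P_t=\Psi(\mathcal{A},\theta_t,\mathcal{S})$, with $\mathcal{S}$ drawn independently of $\mathcal{D}$.

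\textbf{Step 2: fix $\theta_t$ and push forward.} Condition on $\theta_t$, as the lemma implicitly does by treating $\Psi$ as a map of $(\mathcal{A},\mathcal{S})$ only. The law of $P_t$ is then the pushforward of $\mathcal{D}_{noise}$ under $\Psi(\mathcal{A},\theta_t,\cdot)$. Neither $\mathcal{D}_{noise}$ nor $\Psi$ references $\mathcal{D}$, so for every measurable $S$ we get $\Pr[P_t\in S\mid\mathcal{D}]=\Pr[P_t\in S\mid\mathcal{D}']$. This is exactly $\epsilon=0$.

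\textbf{Step 3: handle the dependence on $\theta_t$.} I expect this to be the main subtlety. Here $\theta_t$ does depend on $\mathcal{D}$, but only through the previously privatized outputs $\tilde{G}_0,\dots,\tilde{G}_{t-1}$. I would treat it in one of two ways. One option is to view $\Psi(\cdot,\theta_t,\mathcal{S})$ as post-processing of those releases combined with independent public randomness; post-processing invariance of DP and RDP then means it adds no privacy cost. The other option is to state the lemma within the adaptive composition framework, where $\theta_t$ is part of the public history on which step $t$ conditions. Either way, $\Psi$ contributes zero to the privacy ledger.

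The one thing to verify carefully is that $\mathcal{D}_{noise}$ uses no statistics estimated from $\mathcal{D}$. For example, the spectral exponent $\alpha$ and the token-frequency tables must be fixed a priori or taken from public priors, as in Appendix~\ref{app:probing}. If that holds, the claim follows.
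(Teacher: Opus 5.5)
Your proposal is correct and follows the same route as the paper. The paper's proof simply notes that $\Psi$ takes only $\mathcal{A}$ and $\mathcal{S}$ (generated from a fixed seed or public entropy) as inputs, so $P_t(\mathcal{D})=P_t(\mathcal{D}')$ for adjacent datasets and no privacy is lost. Your Step 3 treats the dependence on $\theta_t$ as post-processing of earlier privatized releases, or equivalently as conditioning within adaptive composition. This makes explicit a point the paper's proof leaves implicit, mentioning it only in the statement of Theorem~\ref{thm:privacy_mechanism}.
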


\begin{proof}
The function $\Psi$ depends exclusively on the model architecture definition $\mathcal{A}$ (which is public knowledge) and the synthetic batch $\mathcal{S}$ (which is generated computationally using a fixed random seed or public entropy). The private dataset $\mathcal{D}$ is not an input to $\Psi$. Therefore, for any two adjacent private datasets $\mathcal{D}, \mathcal{D}'$, the preconditioner remains identical: $P_t(\mathcal{D}) = P_t(\mathcal{D}')$. Consequently, the generation of $P_t$ incurs zero privacy loss.
\end{proof}

\begin{theorem}[Privacy of Preconditioned Updates]
\label{thm:privacy_mechanism}
Let $P_t$ be a fixed linear operator that is batch-independent (computed from synthetic noise and past privatized model states per Lemma \ref{lemma:data_independence}). The preconditioned update mechanism defined by:
\begin{equation}
    \mathcal{M}(\mathcal{B}) = \sum_{x_i \in \mathcal{B}} \text{Clip}_C(P_t \nabla \ell(x_i)) + \mathcal{N}(0, \sigma^2 C^2 I)
\end{equation}
satisfies the same $(\epsilon, \delta)$-Differential Privacy guarantees as the standard Gaussian mechanism with noise scale $\sigma$ and sensitivity $C$. Privacy accounting is performed on this noisy sum; subsequent averaging scales both signal and noise uniformly without affecting the privacy guarantee.
\end{theorem}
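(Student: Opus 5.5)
The plan is to reduce the composite mechanism to the standard Gaussian mechanism. The only point that needs care is that $P_t$ is fixed with respect to the current batch. The argument conditions on everything released before step $t$ and treats $P_t$ as a deterministic linear map. By Lemma~\ref{lemma:data_independence}, the synthetic probe $\mathcal{S}$ and the architecture $\mathcal{A}$ carry no information about $\mathcal{D}$. The remaining input to $\Psi$ is $\theta_t$, which is a function of the previously privatized outputs $\mathcal{M}_0,\dots,\mathcal{M}_{t-1}$ only. So, conditionally on that history, $P_t$ is the same for any two adjacent datasets $\mathcal{D},\mathcal{D}'$, and the step-$t$ mechanism is a map whose only dependence on the data is through the sum of per-sample terms.

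Next I would bound the sensitivity. Define the per-sample map $h(x) = \text{Clip}_C(P_t \nabla \ell(x))$. By definition of $\text{Clip}_C$, $\|h(x)\|_2 \le C$ for every $x$, whatever the operator norm of $P_t$. This is where the ordering ``precondition, then clip'' matters: a large $\lambda_{max}$ cannot inflate sensitivity. For adjacent batches that differ by adding or removing one element (the Opacus convention), the difference of sums is a single $h(x)$, so the $L_2$ sensitivity is at most $C$. Under replace-one adjacency I would use the usual bound $2C$, consistent with whatever convention the base Gaussian mechanism uses. Since the sensitivity matches that of $\mathcal{M}_{Gauss}$ and the noise $\mathcal{N}(0,\sigma^2C^2 I)$ is identical, the standard Gaussian-mechanism analysis gives the same RDP curve $\alpha/(2\sigma^2)$, and hence the same $(\epsilon,\delta)$ after conversion. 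The same holds with subsampling amplification, because Poisson sampling acts on the batch before $h$ is applied.

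Finally I would handle composition over $t$ and post-processing. Because $P_t$ is computed from past outputs, the whole training loop is an adaptive composition of these step mechanisms. RDP adaptive composition (Mironov) applies directly, since each step satisfies the RDP bound conditionally on any fixed history. The final division by $|\mathcal{B}|$ and the parameter update are post-processing of the noisy sum, so they preserve the guarantee. Refreshing the preconditioner every $T_{freq}$ steps is also post-processing of released states combined with independent public randomness.

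The main obstacle I expect is making the conditioning argument rigorous. Specifically, I need to justify that treating $P_t$ as fixed inside the step-$t$ analysis is legitimate under adaptive composition. I would address this by writing the step-$t$ mechanism explicitly as $\mathcal{M}_t(\mathcal{D}; o_{<t}, \mathcal{S}_t)$, with $\mathcal{S}_t$ drawn independently of $\mathcal{D}$, and proving the RDP bound uniformly over all $(o_{<t},\mathcal{S}_t)$. Averaging over $\mathcal{S}_t$ cannot increase Rényi divergence, by joint quasi-convexity, so the bound survives when $\mathcal{S}_t$ is random rather than fixed.
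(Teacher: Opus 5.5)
Your proposal is correct and follows the same route as the paper: everything rests on clipping being applied \emph{after} $P_t$. Each per-sample contribution $\text{Clip}_C(P_t\nabla\ell(x))$ therefore has norm at most $C$ whatever $\|P_t\|$ is, which gives the same $L_2$ sensitivity and noise as the standard Gaussian mechanism. The extra care you add makes explicit what the paper leaves implicit without changing the argument: conditioning on the released history so adaptive RDP composition applies, handling the random synthetic probe via quasi-convexity, and separating add/remove adjacency (sensitivity $C$) from replace-one adjacency (sensitivity $2C$, which the paper's proof blurs).
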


\begin{proof}
The privacy guarantee of the Gaussian mechanism is determined entirely by the $L_2$-sensitivity of the query function it obfuscates. We define the query function $q$ over a batch $\mathcal{B}$ as the sum of clipped, preconditioned gradients:
\begin{equation}
    q(\mathcal{B}) = \sum_{x_i \in \mathcal{B}} \text{Clip}_C(P_t \nabla \ell(x_i)).
\end{equation}
We analyze the $L_2$-sensitivity $\Delta_2(q)$, which measures the maximum change in the query output resulting from replacing a single sample $x_k$ with $x'_k$ in the batch. Let $\tilde{g}_k = P_t \nabla \ell(x_k)$ be the preconditioned gradient. The sensitivity is given by:
\begin{equation}
    \Delta_2(q) = \max_{\mathcal{B}, \mathcal{B}'} \| q(\mathcal{B}) - q(\mathcal{B}') \|_2 = \max_{x_k, x'_k} \| \text{Clip}_C(\tilde{g}_k) - \text{Clip}_C(\tilde{g}'_k) \|_2.
\end{equation}
The clipping operator $\text{Clip}_C(v) = v \cdot \min(1, C/\|v\|_2)$ ensures that for any input vector $v$, the magnitude of the output is strictly bounded such that $\|\text{Clip}_C(v)\|_2 \le C$. By the triangle inequality, the maximum difference between two vectors, each with a norm bounded by $C$, is $2C$ (in the worst case of opposite directions). However, in the context of DP-SGD, sensitivity is defined regarding the \textit{addition} or \textit{removal} of a single record. For the replacement definition standard in batch-level accounting, the sensitivity is bounded by the maximum contribution of a single record:
\begin{equation}
    \sup_{x} \| \text{Clip}_C(P_t \nabla \ell(x)) \|_2 \le C.
\end{equation}
The magnitude of the preconditioner $P_t$ effectively scales the raw gradient $\nabla \ell(x_k)$, potentially resulting in a vector $\tilde{g}_k$ with a very large norm. However, the clipping operation is applied \textit{after} the matrix multiplication. The projection $\text{Clip}_C(\cdot)$ projects any vector, regardless of how large $\|P_t \nabla \ell\|_2$ becomes, onto the $L_2$ ball of radius $C$. 

Therefore, the global sensitivity is bounded by $\Delta_2(q) \le C$. Since the noise added is $\mathcal{N}(0, \sigma^2 C^2 I)$, the signal-to-noise ratio is governed strictly by $\sigma$, preserving the standard privacy analysis.
\end{proof}

\clearpage

\section{Computational Complexity and Resource Efficiency}
\label{app:complexity}
\index{Computational Complexity}

We analyze the computational trade-offs of DP-KFC. We distinguish between the \textit{arithmetic complexity} (FLOPs per step) and the \textit{sample complexity} (steps to convergence), arguing that in privacy-constrained settings, the latter is the dominant factor.

\subsection{Arithmetic Complexity Analysis}
\index{Arithmetic Complexity}

Let a neural network layer $l$ have input dimension $d_{in}$ and output dimension $d_{out}$. The parameter matrix is $W \in \mathbb{R}^{d_{out} \times d_{in}}$.

\textbf{Standard DP-SGD Cost.}
The dominant cost in standard DP-SGD is the per-sample gradient computation required for clipping. For a batch size $B$, this involves computing the outer product of error signals $\delta \in \mathbb{R}^{d_{out}}$ and activations $a \in \mathbb{R}^{d_{in}}$:
\begin{equation}
    g^{(i)} = \delta^{(i)} (a^{(i)})^\top.
\end{equation}
The cost is $\mathcal{O}(B \cdot d_{in} d_{out})$ per layer.

\textbf{DP-KFC Cost (Per-Step).}
Our method applies the preconditioning transformation $\tilde{g} = U_G g U_A^\top$. Naively implementing this as matrix-matrix multiplications would be prohibitively expensive ($\mathcal{O}(d_{out}^2 d_{in} + d_{in}^2 d_{out})$).
However, exploiting the rank-1 structure of the gradient $g^{(i)}$, we apply the transformation directly to the activation and error vectors:
\begin{equation}
    \tilde{g}^{(i)} = (U_G \delta^{(i)}) (U_A a^{(i)})^\top.
\end{equation}
This reduces the operation to two matrix-vector products per sample. The additional cost is $\mathcal{O}(B(d_{in}^2 + d_{out}^2))$.
Since $d_{in}^2 + d_{out}^2$ is typically comparable to $d_{in} d_{out}$ (the cost of the backward pass), the total FLOPs per iteration increase by a factor of $2\times$ to $3\times$ compared to DP-SGD, preserving the linear scaling with batch size $B$. Empirically, on our benchmarks (NVIDIA A100) this manifests as a $\approx\!2.2\times$ reduction in training sample throughput relative to DP-SGD, consistent with the analytical estimate.

\textbf{Preconditioner Update Cost (Amortized).}
The eigendecomposition of the covariance matrices requires $\mathcal{O}(d_{in}^3 + d_{out}^3)$ operations. Since this is performed only every $T_{freq}$ steps using synthetic data, the amortized complexity is negligible for standard frequencies (e.g., $T_{freq}=50$):
\begin{equation}
    \mathcal{C}_{amortized} = \frac{1}{T_{freq}} \sum_{l=1}^L \left[ \underbrace{M(d_{in}^2 + d_{out}^2)}_{\text{Covariance Est.}} + \underbrace{(d_{in}^3 + d_{out}^3)}_{\text{Eigen-Decomp.}} \right] \approx 0.
\end{equation}

\subsection{The Privacy-Compute Trade-off}
\label{subsec:privacy_wall}
\index{Privacy-Compute Trade-off}

Critiques of second-order methods often focus on wall-clock time. We argue that this metric is ill-posed for Differential Privacy, where the optimization horizon is bounded by the privacy budget $\epsilon$, not computational resources.

\textbf{The Privacy Wall Hypothesis.}
In non-private learning, one can trade time for accuracy: given infinite compute $T \to \infty$, the error approaches zero. In private learning, the noise scale $\sigma$ required for a fixed $\epsilon$ grows with $\sqrt{T}$ (using RDP accounting). This creates a \textit{Privacy Wall}:
\begin{itemize}
    \item \textbf{Non-Private Constraint:} $\min_{\theta} \mathcal{L}(\theta) \quad \text{s.t.} \quad \text{WallClockTime} \le T_{limit}$
    \item \textbf{Private Constraint:} $\min_{\theta} \mathcal{L}(\theta) \quad \text{s.t.} \quad \text{PrivacyLoss}(T) \le \epsilon_{budget}$
\end{itemize}
Because the number of steps $T$ is strictly capped by $\epsilon_{budget}$, the only mechanism to improve final utility is to increase the \textit{information gain per gradient step}.
As summarized in Table \ref{tab:complexity}, DP-KFC invests renewable resources (FLOPs) to conserve non-renewable resources (Privacy Budget). By improving the convergence rate per step $\kappa(\mathcal{L})$, we maximize the utility of the limited queries allowed before hitting the Privacy Wall.

\begin{table}[h]
    \centering
    \caption{\textbf{Resource Consumption Analysis.} Comparison of costs for a fixed privacy budget $\epsilon$. $T_{conv}$ denotes steps to convergence. DP-KFC trades linear arithmetic overhead for a reduction in the required optimization steps.}
    \label{tab:complexity}
    \resizebox{0.7\columnwidth}{!}{
    \begin{tabular}{l|ccc|c}
        \toprule
        \textbf{Method} & \textbf{FLOPs / Step} & \textbf{Memory} & \textbf{Steps to Conv.} & \textbf{Limiting Factor} \\
        \midrule
        DP-SGD & $1\times$ & $\mathcal{O}(d)$ & High ($T_{sgd}$) & Privacy Budget \\
        Full KFAC & $\mathcal{O}(d^2)$ & $\mathcal{O}(d^2)$ & Low ($T_{kfac}$) & Compute / Memory \\
        \textbf{DP-KFC (Ours)} & $\mathbf{2-3\times}$ & $\mathbf{\mathcal{O}(d)}$ & \textbf{Medium ($T_{small}$)} & \textbf{Compute} \\
        \bottomrule
    \end{tabular}
    }
\end{table}

\section{Synthetic Structural Probing}
\label{app:probing}
\index{Synthetic Structural Probing}

In this section, we provide the implementation details for generating the synthetic structural probes used to estimate the KFAC factors $A$ and $G$ without private data.

\subsection{Continuous Domain: Pink Noise Generation}
\label{app:pink_noise}
\index{Continuous Domain: Pink Noise Generation}

For Convolutional Neural Networks (CNNs) and Vision Transformers (ViTs) operating on continuous signals, we generate \textbf{Pink Noise} (also known as $1/f$ noise). Natural images exhibit a power-law spectral density $S(f) \propto 1/f^\alpha$, typically with $\alpha \in [1, 2]$. Probing with white noise ($\alpha=0$) creates a flat spectrum that disproportionately amplifies high-frequency spatial modes which are often dampened by the inductive bias of visual architectures (e.g., pooling layers, blur kernels).

To construct a spatially correlated probe that matches the $1/f$ statistics of natural images, we operate in the frequency domain. Let $H, W$ be the spatial dimensions and $C$ be the number of channels.

\textbf{Generation Procedure:}
\begin{enumerate}
    \item \textbf{White Noise Source:} We sample a standard Gaussian tensor in the frequency domain. To ensure the resulting spatial signal is real-valued, we sample complex values $Z \in \mathbb{C}^{H \times W}$ such that the Hermitian symmetry $Z(-u) = \overline{Z(u)}$ is preserved (or simply sample in the spatial domain and apply FFT).
    \item \textbf{Spectral Modulation:} We compute the frequency grid $\mathbf{u} \in \mathbb{R}^{H \times W}$, where $\mathbf{u}_{ij}$ represents the Euclidean distance from the DC component (frequency zero). We scale the amplitude of the noise by the inverse frequency:
    \begin{equation}
        \tilde{Z}_{\mathbf{u}} = Z_{\mathbf{u}} \cdot \frac{1}{\|\mathbf{u}\|_2^{\alpha/2} + \epsilon}
    \end{equation}
    where $\epsilon$ prevents division by zero at the DC component.
    \item \textbf{Inverse Transform:} We apply the Inverse Fast Fourier Transform (IFFT) to obtain the spatial signal $x_{\text{pink}} = \text{Re}(\mathcal{F}^{-1}(\tilde{Z}))$.
    \item \textbf{Normalization:} Since KFAC factor estimation depends on the relative variance scale, we normalize the resulting batch to zero mean and unit variance to match the standard initialization statistics of the network weights.
\end{enumerate}

Algorithm \ref{alg:pink_noise} details the implementation.

\begin{algorithm}[h]
\caption{Synthetic Pink Noise Generator ($1/f^\alpha$)}
\label{alg:pink_noise}
\begin{algorithmic}[1]
\REQUIRE Batch size $B$, Channels $C$, Height $H$, Width $W$, Decay $\alpha$ (default 1.0).
\ENSURE Synthetic Batch $X \in \mathbb{R}^{B \times C \times H \times W}$.

\STATE \textbf{1. Frequency Grid:}
\STATE Create coordinate grid $(u, v)$ centered at zero.
\STATE Compute radius $r = \sqrt{u^2 + v^2}$.
\STATE Define Filter $S = 1 / (r^{\alpha} + \epsilon)$.

\STATE \textbf{2. Generate White Noise:}
\STATE $X_{\text{white}} \sim \mathcal{N}(0, 1)$ of shape $(B, C, H, W)$.
\STATE $Z_{\text{white}} = \text{FFT2D}(X_{\text{white}})$.

\STATE \textbf{3. Modulate Spectrum:}
\STATE $Z_{\text{pink}} = Z_{\text{white}} \odot S$ \COMMENT{Element-wise scaling}

\STATE \textbf{4. Spatial Reconstruction:}
\STATE $X_{\text{pink}} = \text{Real}(\text{IFFT2D}(Z_{\text{pink}}))$.

\STATE \textbf{5. Normalize:}
\STATE $X \leftarrow \frac{X_{\text{pink}} - \text{mean}(X_{\text{pink}})}{\text{std}(X_{\text{pink}})}$ \COMMENT{Match standard init variance}
\STATE \RETURN $X$
\end{algorithmic}
\end{algorithm}

\subsection{Discrete Domain: Structural Token Noise}
\label{app:token_noise}
\index{Discrete Domain: Structural Token Noise}

For Large Language Models (LLMs) and Transformers, "white noise" in the embedding space is insufficient because it creates vectors that do not lie on the discrete manifold of valid token embeddings. Furthermore, Transformers possess strong architectural constraints—specifically \textbf{padding masks} and \textbf{special token anchors} (e.g., \texttt{[CLS]}, \texttt{[SEP]})—that dictate the sparsity pattern of the attention mechanism and the backward error flow.

To recover the geometry of the self-attention blocks ($A_{l-1}$) and the feed-forward networks, we generate \textbf{Structural Token Noise}. Instead of sampling continuous vectors, we sample integer sequences that respect the model's structural requirements.

\textbf{Generation Procedure:}
\begin{enumerate}
    \item \textbf{Variable Length Sampling:} To capture the Fisher dynamics across different context windows, we do not fix the sequence length. For each synthetic batch, we sample a random active length $L_{\text{act}} \sim \mathcal{U}(L_{\min}, L_{\max})$.
    \item \textbf{Vocabulary Sampling:} For the active positions, we sample integers uniformly from the model's vocabulary $v \sim \mathcal{U}(0, V_{\text{vocab}})$. This ensures that when projected through the model's embedding layer, the inputs activate the lookup table with the correct variance $q^0 = \text{Var}(W_{\text{embed}})$.
    \item \textbf{Structural Anchoring:} We enforce the insertion of special tokens (e.g., \texttt{[CLS]} at index 0) required by the architecture.
    \item \textbf{Mask Construction:} We explicitly construct the binary \texttt{attention\_mask}. Positions $i > L_{\text{act}}$ are set to 0 (padded). This is critical for KFAC, as it ensures the estimated covariance matrices correctly reflect the sparsity induced by the masking operation in the backward pass.
\end{enumerate}

Algorithm \ref{alg:token_noise} details the implementation.

\begin{algorithm}[h]
\caption{Structural Token Noise Generator}
\label{alg:token_noise}
\begin{algorithmic}[1]
\REQUIRE Batch size $B$, Max Length $L_{\max}$, Vocab Size $V$, Special Tokens (CLS, PAD).
\ENSURE Input IDs $X \in \mathbb{Z}^{B \times L}$, Mask $M \in \{0,1\}^{B \times L}$.

\STATE Initialize $X \leftarrow \text{fill}(B, L, \text{PAD})$.
\STATE Initialize $M \leftarrow \text{zeros}(B, L)$.

\FOR{sample $i = 1 \dots B$}
    \STATE Sample length $\ell_i \sim \mathcal{U}(1, L_{\max})$.
    
    \STATE \textcolor{gray}{// 1. Generate Random Payload}
    \STATE Sample tokens $t \sim \mathcal{U}(0, V)$ of size $\ell_i$.
    \STATE $X[i, 0:\ell_i] \leftarrow t$.
    
    \STATE \textcolor{gray}{// 2. Enforce Structural Anchors}
    \STATE $X[i, 0] \leftarrow \text{CLS}$.
    \IF{$\text{uses\_separator}$}
        \STATE $X[i, \ell_i] \leftarrow \text{SEP}$.
    \ENDIF
    
    \STATE \textcolor{gray}{// 3. Generate Valid Mask}
    \STATE $M[i, 0:\ell_i] \leftarrow 1$.
    \STATE \textcolor{gray}{// Remaining positions stays 0 (PAD)}
\ENDFOR

\STATE \RETURN $X, M$ \COMMENT{Fed into model to get embeddings}
\end{algorithmic}
\end{algorithm}

\subsection{KFAC for Modern Architectures}
\label{app:kfac_architectures}
\index{KFAC for Modern Architectures}

The Kronecker preconditioning $\tilde{g}_l = U_{G,l} \cdot g_l \cdot U_{A,l}$ assumes the gradient $g_l \in \mathbb{R}^{d_{out} \times d_{in}}$ is a matrix, which is natural for fully-connected layers. Here we detail how KFAC extends to convolutional and attention layers.

\textbf{Convolutional Layers.} For a Conv2d layer with kernel size $k \times k$, input channels $C_{in}$, and output channels $C_{out}$, the gradient tensor has shape $(C_{out}, C_{in}, k, k)$. We apply the im2col transformation \cite{DBLP:journals/corr/MartensG15}: the input activation tensor $X \in \mathbb{R}^{B \times C_{in} \times H \times W}$ is unfolded into patches $\tilde{X} \in \mathbb{R}^{n \times (C_{in} \cdot k^2)}$, where $n = B \cdot H_{out} \cdot W_{out}$ treats each spatial location as an independent sample. The activation covariance becomes:
\begin{equation}
    \hat{A} = \frac{1}{n} \tilde{X}^\top \tilde{X} \in \mathbb{R}^{(C_{in} \cdot k^2) \times (C_{in} \cdot k^2)}
\end{equation}
Similarly, the gradient tensor is reshaped to $\tilde{G} \in \mathbb{R}^{n \times C_{out}}$, yielding $\hat{G} = \frac{1}{n}\tilde{G}^\top \tilde{G}$. This reduces convolutional preconditioning to the matrix case.

\textbf{Transformer Architectures.} Vision Transformers (ViT) and language models (BERT) consist primarily of Linear layers: attention projections ($Q$, $K$, $V$, output) and MLP blocks. These apply standard KFAC directly. The weight-sharing across sequence positions (each token uses the same projection weights) is handled via the K-FAC-reduce formulation \cite{eschenhagen2023kfac}: activations from all sequence positions are pooled together when computing covariances. For an activation tensor $A \in \mathbb{R}^{B \times T \times d}$ (batch, sequence, features), we reshape to $(B \cdot T) \times d$ and compute:
\begin{equation}
    \hat{A} = \frac{1}{B \cdot T} A_{flat}^\top A_{flat}
\end{equation}
This averaging over the weight-sharing dimension is computationally efficient and empirically equivalent to the more expensive K-FAC-expand variant that maintains separate covariances per position.

\textbf{Memory Considerations.} For architectures with large convolutional kernels (e.g., ConvNeXt with $7 \times 7$ depthwise convolutions), the covariance matrix size is $O((C_{in} \cdot k^2)^2)$, which can be prohibitive. In such cases, we apply KFAC only to the Linear layers (classifier head, MLP blocks) while leaving large convolutions unpreconditioned. This hybrid approach captures the dominant geometry while avoiding excessive memory costs.

\section{Spectral Analysis and Theoretical Justification}
\label{app:spectral_analysis}
\index{Spectral Analysis and Theoretical Justification}

In this section, we provide the theoretical formulations for the eigenspectrum alignment observed in Section~\ref{subsec:spectral_validation}. We connect results from Mean Field Theory (MFT) and Random Matrix Theory (RMT) to justify why synthetic noise probing recovers the essential eigenspectrum of the preconditioner matrix required for effective preconditioning. We then detail the empirical methodologies used to validate this theory.

\subsection{Theoretical Foundations}
\index{Theoretical Foundations}

Our method relies on the insight that the spectral decay of layer-wise covariance matrices is an asymptotic property of the architecture depth and non-linearity, rather than a unique feature of the data manifold.

\subsubsection{Mean Field Variance Dynamics}
Consider a deep neural network with $L$ layers in the infinite-width limit. Let $q^l_{ab}$ denote the covariance between the pre-activations of two inputs $x_a, x_b$ at layer $l$. The propagation of this covariance is governed by a deterministic recurrence map $\mathcal{V}$ \cite{schoenholz2016deep}:
\begin{equation}
    q^l_{ab} = \sigma_w^2 \mathbb{E}_{z_1, z_2 \sim \mathcal{N}(0, \Sigma_{l-1})} [\phi(z_1)\phi(z_2)] + \sigma_b^2
\end{equation}
A key result in signal propagation theory is the existence of a stable fixed point $q^*$ for the variance map $\mathcal{V}$ under standard initializations.

\begin{proposition}[Variance Contraction]
\label{prop:variance_contraction}
Assume a bounded activation function $\phi$ and initialization parameters in the ordered phase. For any two input distributions $\mathcal{D}_{priv}$ and $\mathcal{D}_{syn}$, the layer-wise variances $q^l$ converge exponentially to the same fixed point $q^*$ as depth $L \to \infty$:
\begin{equation}
    \lim_{l \to \infty} |q^l(\mathcal{D}_{priv}) - q^l(\mathcal{D}_{syn})| = 0
\end{equation}
\end{proposition}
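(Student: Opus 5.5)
The plan is to treat the statement as a fixed-point result for the one-dimensional variance map $\mathcal{V}(q) = \sigma_w^2 \int \phi(\sqrt{q}z)^2\,\mathcal{D}z + \sigma_b^2$ from Section~\ref{subsec:mft_prelim}. Neither $\mathcal{D}_{priv}$ nor $\mathcal{D}_{syn}$ enters the recursion except through the initial value $q^0$. Concretely, $q^0(\mathcal{D}) = \sigma_w^2\,\mathbb{E}_{x\sim\mathcal{D}}\|x\|^2/d_0 + \sigma_b^2$ for the first pre-activation. After that, $q^l = \mathcal{V}^{\circ l}(q^0)$ for both distributions. It therefore suffices to show that $\mathcal{V}^{\circ l}$ converges to a common fixed point $q^*$, uniformly on the relevant range of initial variances. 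The limit of the difference then vanishes by the triangle inequality: $|q^l(\mathcal{D}_{priv}) - q^l(\mathcal{D}_{syn})| \le |q^l(\mathcal{D}_{priv})-q^*| + |q^l(\mathcal{D}_{syn})-q^*|$. I will make explicit the implicit assumption that both input distributions have finite second moments, so that $q^0 < \infty$ in each case.

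\textbf{Step 1 (invariant interval).} Boundedness of $\phi$, say $|\phi|\le K$, gives $\sigma_b^2 \le \mathcal{V}(q) \le \sigma_w^2 K^2 + \sigma_b^2 =: \bar q$ for every $q\ge 0$. So after a single layer, both sequences lie in the compact interval $I=[\sigma_b^2,\bar q]$, and $\mathcal{V}(I)\subseteq I$.

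\textbf{Step 2 (contraction on $I$).} Differentiating under the Gaussian integral gives $\mathcal{V}'(q) = \sigma_w^2 \int \phi(\sqrt q z)\phi'(\sqrt q z)\, z\,\mathcal{D}z / \sqrt{q}$. Gaussian integration by parts turns this into $\mathcal{V}'(q)=\sigma_w^2\,\mathbb{E}[\phi'(\sqrt q z)^2 + \phi(\sqrt q z)\phi''(\sqrt q z)]$, assuming enough smoothness of $\phi$. The ordered-phase hypothesis is exactly what I take to mean $\sup_{q\in I}|\mathcal{V}'(q)| =: \kappa < 1$. The standard MFT condition $\chi_1 = \sigma_w^2\mathbb{E}[\phi'(\sqrt{q^*}z)^2]<1$ only gives this locally at $q^*$. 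If it holds only locally, I would fall back on monotonicity of $\mathcal{V}$ (true for odd $\phi$ such as $\tanh$, where $\phi^2$ is increasing in $|\cdot|$) together with uniqueness of the fixed point in $I$. This yields monotone convergence of every orbit, and local contraction then upgrades it to an exponential rate.

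\textbf{Step 3 (conclusion).} Given a contraction on the complete space $I$, the Banach fixed-point theorem yields a unique $q^*\in I$ with $|q^l - q^*| \le \kappa^{l-1}|q^1-q^*| \le \kappa^{l-1}\,|I|$. This bound holds for both distributions, so the difference is at most $2\kappa^{l-1}(\bar q-\sigma_b^2)\to 0$ exponentially, which proves the proposition.

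The main obstacle is Step 2: the statement as written does not pin down "ordered phase" precisely enough to give a global contraction constant. Uniqueness of the fixed point and the sign or size of $\mathcal{V}'$ away from $q^*$ depend on $\phi$. My first attempt will be to state the hypothesis as a global bound $\sup_I|\mathcal{V}'|<1$. I will then verify it for $\tanh$ via $|\phi\phi''|$ and $\phi'^2$ bounds, and otherwise use the monotone-plus-unique-fixed-point route. I will also remark that the result concerns only the diagonal variance $q^l$, not the off-diagonal $q^l_{ab}$, which is the quantity the subsequent spectral argument actually needs.
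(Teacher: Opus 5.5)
Your proposal is correct and follows the same route as the paper. The paper's proof sketch simply asserts that $q^l=\mathcal{V}(q^{l-1})$ is a contraction on the domain of variances, so that differences in $q^0$ are attenuated exponentially toward the architectural fixed point $q^*$; your invariant interval from $|\phi|\le K$ together with the Banach fixed-point argument is a rigorous version of exactly that.

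Your explicit hypothesis $\sup_{q\in I}|\mathcal{V}'(q)|<1$ is the global-contraction assumption the paper leaves implicit under ``ordered phase,'' so stating it is the right call. There is one small slip in your fallback: oddness of $\phi$ alone does not make $\mathcal{V}$ monotone. For example, $\phi(x)=xe^{-x^2}$ gives $\mathcal{V}(q)-\sigma_b^2=\sigma_w^2\, q(1+4q)^{-3/2}$, which is not monotone. The condition you actually need is $|\phi|$ nondecreasing in $|x|$, as for $\tanh$.
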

\textit{Proof Sketch.} The map $q^l = \mathcal{V}(q^{l-1})$ acts as a contraction mapping on the domain of variances. Initial differences in signal magnitude $q^0$ between private data and noise are attenuated exponentially with depth, fixing the signal scale to the architectural fixed point $q^*$.

\subsubsection{Spectral Shape}
The KFAC preconditioner relies on the spectra of the factors $A_{l-1} = \mathbb{E}[a a^\top]$ and $G_l = \mathbb{E}[g g^\top]$. According to \citet{pmlr-v70-pennington17a}, the limiting spectral density $\rho(\lambda)$ of these matrices follows a power-law decay determined by the architectural parameters.

\begin{theorem}[Spectral Scaling Invariance]
\label{thm:spectral_invariance}
Let $\lambda_k(A_l)$ be the $k$-th eigenvalue of the covariance at layer $l$, and assume the synthetic probe induces a non-degenerate first-layer signal variance $q^0_{syn}>0$. In the large depth limit, the spectral density converges to a form independent of the fine-grained input structure. Specifically, for private data and synthetic noise:
\begin{equation}
    \frac{\lambda_k(A_l^{priv})}{\lambda_k(A_l^{syn})} \approx \alpha \quad \forall k
\end{equation}
where $\alpha>0$ is a scalar scaling factor derived from the initial variance ratio.
\end{theorem}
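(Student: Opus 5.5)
The argument works in the mean-field infinite-width regime of Section~\ref{subsec:mft_prelim}. There the statement becomes a claim about the limiting spectral law of the activation covariance. The plan has three steps. First, write $A_l$ as a function of the layer-$l$ kernel. Second, use Proposition~\ref{prop:variance_contraction} to show that this kernel forgets its input up to a scalar. Third, transfer the resulting matrix statement to ordered eigenvalues.

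\textbf{Step 1 (kernel representation).} Conditional on the weights $W_1,\dots,W_l$, we have $A_l = \mathbb{E}_x[a_l a_l^\top]$. In the large-width limit, each coordinate of $s_l$ is Gaussian with covariance structure given by the two-input kernel $q^l_{ab}$. Averaging over the input distribution, $A_l$ can be written as $\sigma_w^2$-scaled random-feature Gram statistics of the form $W_l\,\Phi(K^{l-1})\,W_l^\top$ plus a bias term. Here $\Phi$ is the dual-activation map and $K^{l-1}$ is the averaged kernel, consisting of a diagonal part $q^{l-1}$ and an off-diagonal part $c^{l-1}$. By \citet{pmlr-v70-pennington17a}, the empirical spectral density of such a matrix is a deterministic functional of the pair $(q^{l-1}, c^{l-1})$ and the aspect ratios. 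The fine structure of the input enters only through these two numbers.

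\textbf{Step 2 (contraction of the kernel).} Proposition~\ref{prop:variance_contraction} gives $q^l(\mathcal{D}_{priv}), q^l(\mathcal{D}_{syn}) \to q^*$ exponentially in depth. For the off-diagonal part, I would invoke the same ordered-phase argument of \citet{schoenholz2016deep}: the correlation map has the stable fixed point $c^*$, with rate $\chi_1<1$. The hypothesis $q^0_{syn}>0$ is used exactly here, so that the synthetic trajectory starts inside the basin of attraction rather than at the degenerate point $0$. Both trajectories therefore reach the same limit kernel up to an $O(\chi^l)$ error. The only surviving discrepancy at finite depth is a multiplicative factor on the overall scale, inherited from the ratio $q^0_{priv}/q^0_{syn}$ and transported through the linearization of $\mathcal{V}$ around the fixed point. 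This gives
\[
A_l^{priv} = \alpha\, A_l^{syn} + E_l, \qquad \|E_l\|_{op} = O(\chi^l).
\]

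\textbf{Step 3 (eigenvalues).} Weyl's inequality then yields $|\lambda_k(A_l^{priv}) - \alpha\lambda_k(A_l^{syn})| \le \|E_l\|_{op}$. Dividing by $\lambda_k(A_l^{syn})$ gives the ratio statement. This step needs a lower bound on $\lambda_k(A_l^{syn})$. For that I would use the damping $\pi I$, or equivalently the bias term $\sigma_b^2>0$, which keeps every eigenvalue at least $\pi$. The ``$\approx$'' should then be read as error $O(\chi^l/\pi)$, uniformly in $k$.

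\textbf{Main obstacle.} The hardest step is Step 2's passage from scalar kernel convergence to a matrix identity with a single $\alpha$ for all $k$. Mean field controls only the averaged diagonal and off-diagonal statistics, not the eigenvectors of the data correlation at finite width. I would first try to prove the identity at the level of limiting spectral densities, i.e. a push-forward $\rho^{priv}(\lambda)=\rho^{syn}(\lambda/\alpha)/\alpha$. This follows from Step 1 because the limiting density is a deterministic function of $(q,c)$. Quantiles of the two laws then scale by exactly $\alpha$, which gives the $\forall k$ statement for bulk eigenvalues. Outlier eigenvalues, e.g. from the mean direction when $c^*\neq 0$, would have to be handled separately as a rank-one perturbation.
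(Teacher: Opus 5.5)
Your outline follows the paper's own sketch: Proposition~\ref{prop:variance_contraction} fixes the scale, and a single leftover scalar tied to $q^0_{priv}/q^0_{syn}$ is declared the only surviving discrepancy. The paper makes the same unproved leap from scalar contraction to $F_l^{priv}\approx\alpha_l F_l^{syn}$. Step~2 is where this fails, for two reasons.

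\emph{Contraction gives no persistent factor.} Linearizing at the fixed point gives $q^l-q^*\approx\mathcal{V}'(q^*)^l(q^0-q^*)$. That is an additive error that dies out. So if both trajectories reach the same $(q^*,c^*)$, your decomposition forces $\alpha=1+O(\chi^l)$, not a constant set by the initial variance ratio.

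\emph{The input enters through more than two numbers.} Step~1's claim that the input enters only through $(q^{l-1},c^{l-1})$ is false for structured inputs. The nonzero spectrum of $A_l=\frac1N\sum_a a_l(x_a)a_l(x_a)^\top$ is that of the Gram matrix $\frac1N[a_l(x_a)^\top a_l(x_b)]_{ab}$. Its mean-field limit depends on the whole matrix of pairwise correlations $C^0=[c^0_{ab}]$. The random-matrix spectral laws you invoke are derived for i.i.d.\ (white) Gaussian data, which is precisely what pink noise versus natural images is not.

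Worse, the linearization you invoke works against you. At the ordered-phase fixed point $c^*=1$ with slope $\chi_1$, one has $1-c^l_{ab}\approx\chi_1^l(1-c^0_{ab})$. So the pre-activation kernel (and, up to an affine map with a rank-one shift, the post-activation one) is
\begin{equation*}
K^l\approx q^*(1-\chi_1^l)\,\mathbf{1}\mathbf{1}^\top+q^*\chi_1^l\,C^0 .
\end{equation*}
By interlacing, the top eigenvalue is architectural. The remaining eigenvalues, however, interlace with $q^*\chi_1^l$ times those of $C^0$, or of $C^{l_0}$ once the trajectory enters the linear regime. Hence $\lambda_k(A_l^{priv})/\lambda_k(A_l^{syn})$ tracks the corresponding ratio of input-correlation eigenvalues. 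Mean field propagates the input's spectral shape into the bulk rather than erasing it. The ratio is uniform in $k$ only if the two input correlation spectra are already proportional. In the chaotic phase the bulk instead flattens to $q^*(1-c^*)$, giving $\alpha=1$ but no power-law shape at all.

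This also undercuts Step~3. Weyl gives an absolute error $O(\chi^l)$, the same order as the bulk eigenvalues themselves, so no relative control survives. The damping floor restores a ratio $\approx1$ only because both damped bulks sit at $\approx\pi$. That is vacuous, and it concerns $\hat A_l$ rather than the $A_l$ of the statement. The bias gives no floor at all: $b_l$ is shared across inputs, so $\sigma_b^2$ adds $\sigma_b^2\mathbf{1}\mathbf{1}^\top$ to the kernel (raising correlations), not $\sigma_b^2 I$ to the feature covariance.

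What is missing is one of two things:
\begin{itemize}
\item an explicit hypothesis that the probe reproduces the data's input correlation structure, which the $1/f^\alpha$ construction targets only heuristically and which contraction alone cannot supply; or
\item a weaker conclusion about the trace and top eigenvalue, which is all that Eq.~\eqref{eq:mft} and variance contraction actually deliver.
\end{itemize}
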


\textit{Proof Sketch.} By Proposition~\ref{prop:variance_contraction}, the scalar variance state $q^l$ contracts to the architectural fixed point $q^*$ regardless of the input distribution, so the \emph{shape} of the limiting covariance $A_l$ (its eigenvector basis and the relative magnitudes $\lambda_k/\lambda_1$) is governed by the recurrence map $\mathcal{V}$ and its Jacobian, not by the input statistics; only an overall multiplicative constant $\alpha$, set by how the finite-depth trajectory enters the basin around $q^*$, depends on $q^0_{priv}/q^0_{syn}$. The same argument applied to the dual (backward) recurrence governs $G_l$. Composing, the layer-wise Fisher block satisfies $F_l^{priv}\approx \alpha_l\,F_l^{syn}$ for a scalar $\alpha_l$, hence $(\tilde F_l^{-1/2})\,F_l^{priv}\,(\tilde F_l^{-1/2})\approx \alpha_l\,I$.

Empirically, we observe this scaling in Fig.~\ref{fig:kfac_spectrum_unified} where curves maintain consistent relative ordering. Since our preconditioning transformation $U_A = Q \Lambda^{-1/2} Q^\top$ normalizes based on the relative spread of eigenvalues, the per-layer scalar factors $\alpha_l$ are absorbed into the global clipping norm and learning rate. Thus, the noise-derived preconditioner correctly normalizes the geometry of the private data. We caution that this is an asymptotic (large-depth) statement: the directional ($Q$) alignment it predicts holds well in early layers but degrades in deeper layers, where eigenvector directions become data- (and label-) dependent (Appendix~\ref{app:cov_tracking}); the \emph{scale} alignment, which is what the clipping norm relies on, remains accurate throughout (Figs.~\ref{fig:kfac_cov_track}, \ref{fig:spectrum_grid}).

\subsection{Empirical Verification Methodology}
\index{Empirical Verification Methodology}

We validate these theoretical claims using two complementary spectral analysis techniques.

\subsubsection{Eigenvalue Reconstruction via KFAC Factors}
Direct computation of the full Hessian spectrum is intractable. We approximate the spectrum of the Fisher block $F_l \approx A_{l-1} \otimes G_l$ analytically. Let the eigendecomposition of the empirical factors be $\hat{A}_{l-1} = Q_A \Lambda_A Q_A^\top$ and $\hat{G}_l = Q_G \Lambda_G Q_G^\top$. The spectrum of the approximated Fisher matrix is given by:
\begin{equation}
    \Lambda_{F_l} = \text{sort}(\{ \lambda_i^A \cdot \lambda_j^G \mid 1 \le i \le d_{in}, 1 \le j \le d_{out} \})
\end{equation}
This reconstruction allows us to visualize the spectral alignment across CNNs, MLPs and attention based architectures shown in Fig.~\ref{fig:kfac_spectrum_unified}.

\begin{figure}[h]
    \centering
    \includegraphics[width=0.95\linewidth]{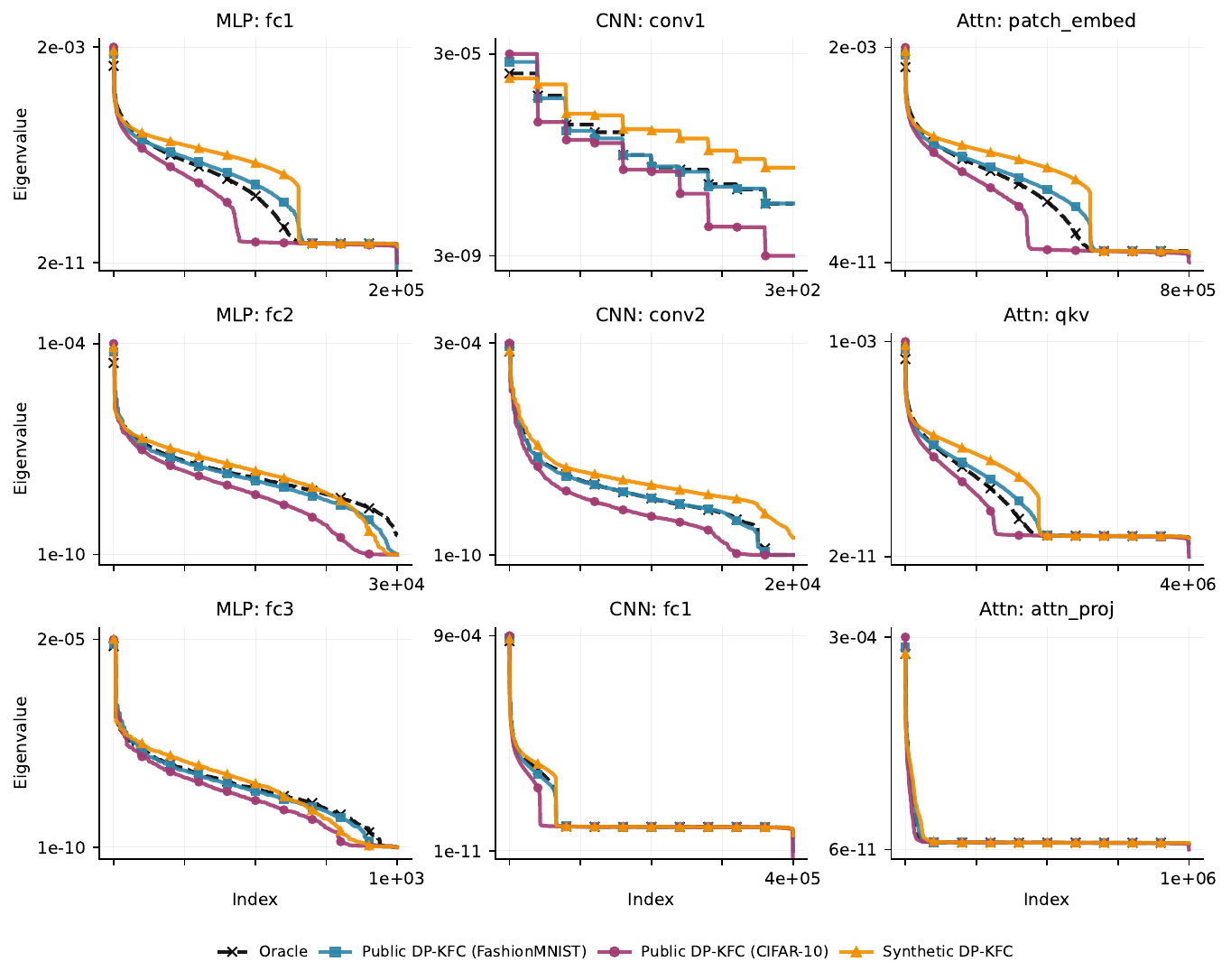}
    \caption{\textbf{KFAC Eigenspectrum Across Architectures and Data Sources.}
      Comparison of KFAC eigenvalue spectra for (a) MLP, (b) CNN, and (c) Attention
      architectures. Each panel shows the sorted eigenvalues of the Kronecker-factored
      Fisher approximation computed using: Oracle (private MNIST data), Public DP-KFC
      with FashionMNIST or CIFAR-10, and Synthetic DP-KFC (pink noise). The synthetic
      preconditioner closely tracks the Oracle spectrum shape across all architectures,
      confirming that network structure, not training data, primarily determines the
      eigenvalue distribution. Domain-matched public data (FashionMNIST) provides
      marginal improvement over synthetic noise.}
      \label{fig:kfac_spectrum_unified}
\end{figure}

\subsubsection{Full Density Estimation via Stochastic Lanczos Quadrature}
To verify that the alignment holds beyond the KFAC approximation, we employ Stochastic Lanczos Quadrature (SLQ) to estimate the spectral density of the full Hessian $H$. SLQ approximates the density by computing moments $\mu_k = v^\top H^k v$ via random probes $v \sim \mathcal{N}(0, I)$.

The Hessian spectrum decomposes into a bulk (architecture-determined eigenvalues) and outliers (data-specific constraints) \cite{sagun2018empiricalanalysishessianoverparametrized}. Our noise probing method effectively captures the "bulk" geometry, which spans the orders of magnitude responsible for ill-conditioning. Fig.~\ref{fig:spectrum_grid} confirms that the support of the noise-estimated spectrum matches the true private spectrum across different architectures.

\begin{figure}[h]
    \centering
    \includegraphics[width=0.65\linewidth]{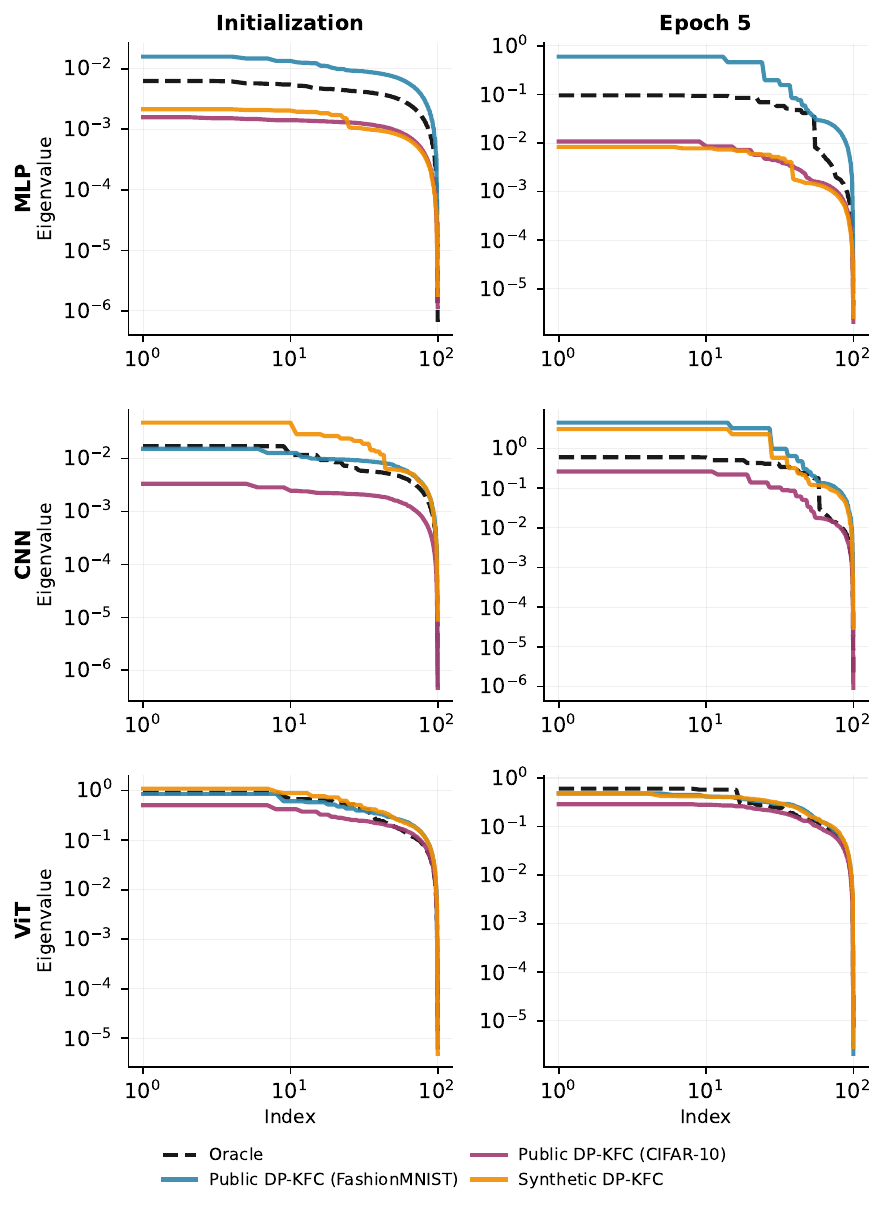}
      \caption{\textbf{Eigenvalue Decay via Stochastic Lanczos Quadrature.}
      Sorted eigenvalue magnitudes at initialization (left) and after training (right)
      for MLP, CNN, and ViT architectures. The Synthetic DP-KFC estimator (orange)
      tracks the decay profile of the Oracle spectrum (black, dashed) across all
      architectures, confirming that network structure primarily determines the
      eigenvalue distribution. Public data sources (blue, purple) show similar alignment.}
    \label{fig:spectrum_grid}
\end{figure}

\clearpage

\section{Covariance Factor Decomposition}
\label{app:cov_tracking}

We analyze the approximation quality of the KFAC factors, decomposing the Fisher Information Matrix into activation covariance $A_{l-1} = \mathbb{E}[a_{l-1} a_{l-1}^\top]$ and gradient covariance $G_l = \mathbb{E}[\delta_l \delta_l^\top]$. We track the alignment of our data-free estimator against private oracle statistics throughout training on a 4-layer CNN (MNIST, $\epsilon = 1.0$), comparing it to a matched public proxy (FashionMNIST) and a mismatched proxy (CIFAR-10).

\subsection{Metrics and Methodology}
To characterize the preconditioner estimation quality, we utilize two complementary metrics. First, \textbf{Cosine Similarity} measures directional alignment: $\text{CosSim}(C^*, \hat{C}) = \text{Tr}(C^{*\top} \hat{C}) / (\|C^*\|_F \|\hat{C}\|_F)$. High similarity indicates that the estimator correctly identifies the principal eigenvector directions of the preconditioner matrix. Second, \textbf{Relative Frobenius Error} captures scale mismatch: $\text{RelFrob}(C^*, \hat{C}) = \|C^* - \hat{C}\|_F / \|C^*\|_F$. Since preconditioning inverses these matrices, scale errors directly impact step size. For the full Fisher block, similarity is multiplicative: $\text{CosSim}(F^*, \hat{F}) \approx \text{CosSim}(A^*, \hat{A}) \cdot \text{CosSim}(G^*, \hat{G})$, implying that misalignment in either factor degrades the overall update.

\begin{figure*}[t]
    \centering
    \includegraphics[width=\textwidth]{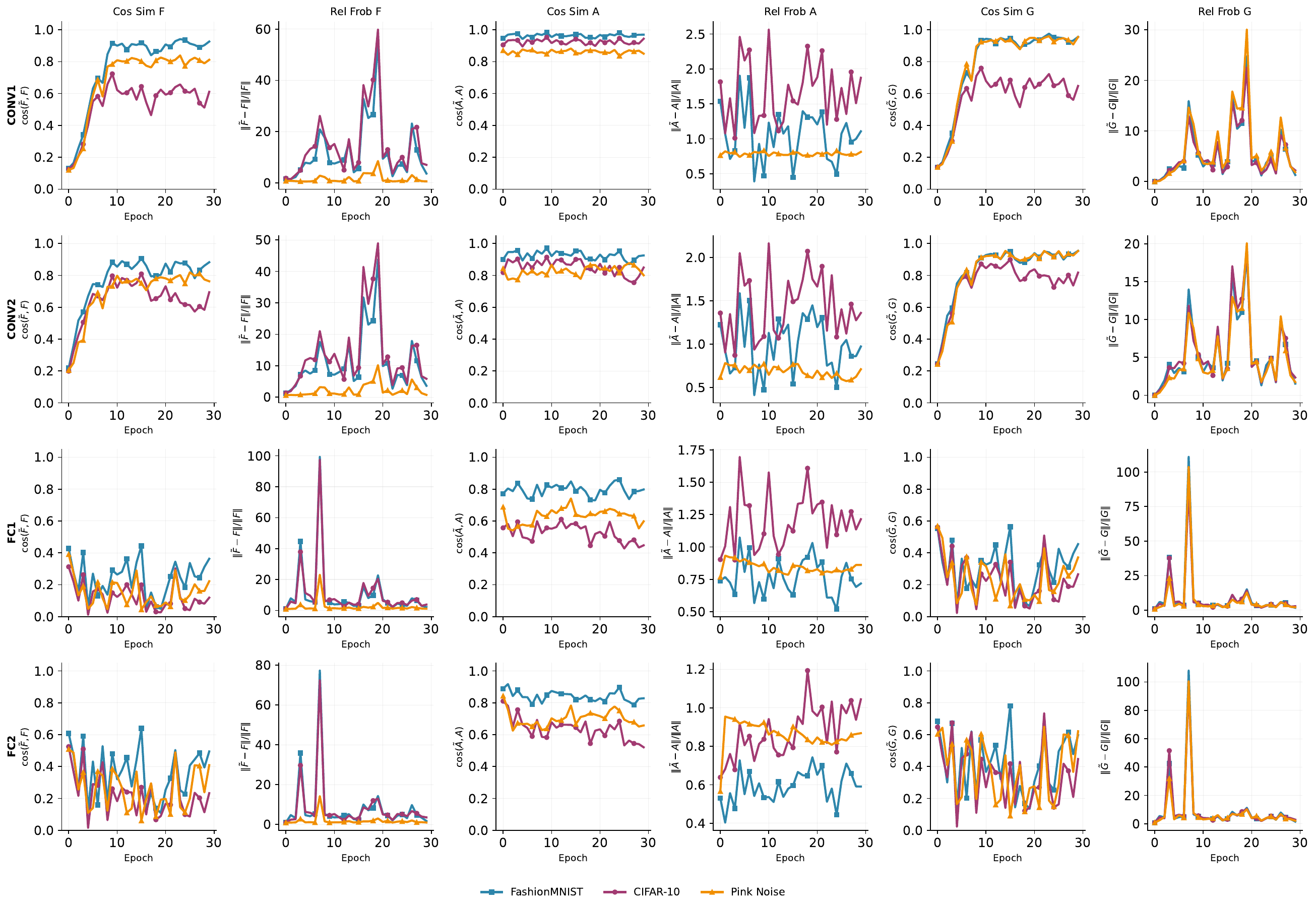}
    \caption{\textbf{Component Stability Analysis.} Decomposition of KFAC estimation error into Activation ($A$) and Gradient ($G$) factors. While backward error scale ($G$) is similar across methods, public data proxies cause catastrophic instability in the forward activation scale ($A$) due to feature specialization. Synthetic DP-KFC (Green) effectively stabilizes $A$, driving overall robustness.}
    \label{fig:cov_tracking_full}
\end{figure*}

\subsection{Results and Interpretation}
Fig.~\ref{fig:cov_tracking_full} details the evolution of covariance alignment across network depth.

\textbf{Activation Covariance ($A$): The Stability of Noise.} We observe a fundamental dichotomy in forward pass stability. Mismatched public proxies (CIFAR-10) exhibit high variance in activation statistics, with Frobenius errors spiking by orders of magnitude. We attribute this to \textit{Feature Specialization}: coherent features in natural images (e.g., textures) disproportionately activate filters optimized to the private manifold, causing activation explosions. In contrast, the Pink Noise estimator remains remarkably stable (error $< 0.5$). The phase incoherence of pink noise ensures energy is distributed across frequencies ($1/f$) without triggering semantic feature detectors, effectively probing the architecture to estimate scale of the activations without saturation.

\textbf{Gradient Covariance ($G$): Architectural Dominance.} The backward error signal is largely method-agnostic. All methods exhibit synchronized error spikes correlated with hard batches in the private loop, as no data-free method can predict label-dependent supervision. However, despite these fluctuations, Pink Noise matches the directional alignment of the semantically related FashionMNIST proxy ($>0.8$ Cosine Sim). This validates that the \textit{architectural scaling} of the backward pass, governed by layer-wise Jacobians, dominates over task-specific label structure.

\textbf{Decoupling in Deep Layers.} The combined metrics reveal that Fisher estimation decouples with depth. In early layers (conv1, conv2), Pink Noise maintains high directional alignment ($> 0.8$), proving that spatial correlation structure ($1/f$) suffices to recover feature extraction geometry in image tasks. In deep layers (fc1, fc2), directional alignment degrades across all methods ($< 0.5$), reflecting the dependency on private labels. However, Pink Noise strictly improves the \textit{scale stability}. While public proxies suffer from high variance due to input mismatch, noise-derived scaling remains robust. This confirms that while the \textit{eigenvector directions} of the Fisher matrix in deep layers are data-dependent, the \textit{eigenvalue magnitudes} are intrinsic architectural properties recoverable via synthetic probing.

 These findings suggest that synthetic noise is strictly preferable to mismatched public data. It avoids negative transfer in the forward pass by preventing feature collisions while maintaining robust scaling in the backward pass. Even when public data is available, synthetic noise provides comparable alignment with greater stability against batch-specific outliers.

  \subsection{Extension to Attention-Based Architectures}
  \label{app:cov_tracking_vit}

  We repeat the covariance tracking analysis on a simple Vision Transformer (2-block, 4-head, embed\_dim=64) trained on MNIST ($\epsilon = 1.0$, 30 epochs). To compress the 14 linear
  layers into a readable comparison, we group them into three categories: Attention (8 Q/K/V/output projections), MLP (4 feed-forward layers), and Embedding/Head (patch projection and
   classifier).

  \begin{figure*}[t]
      \centering
      \includegraphics[width=\textwidth]{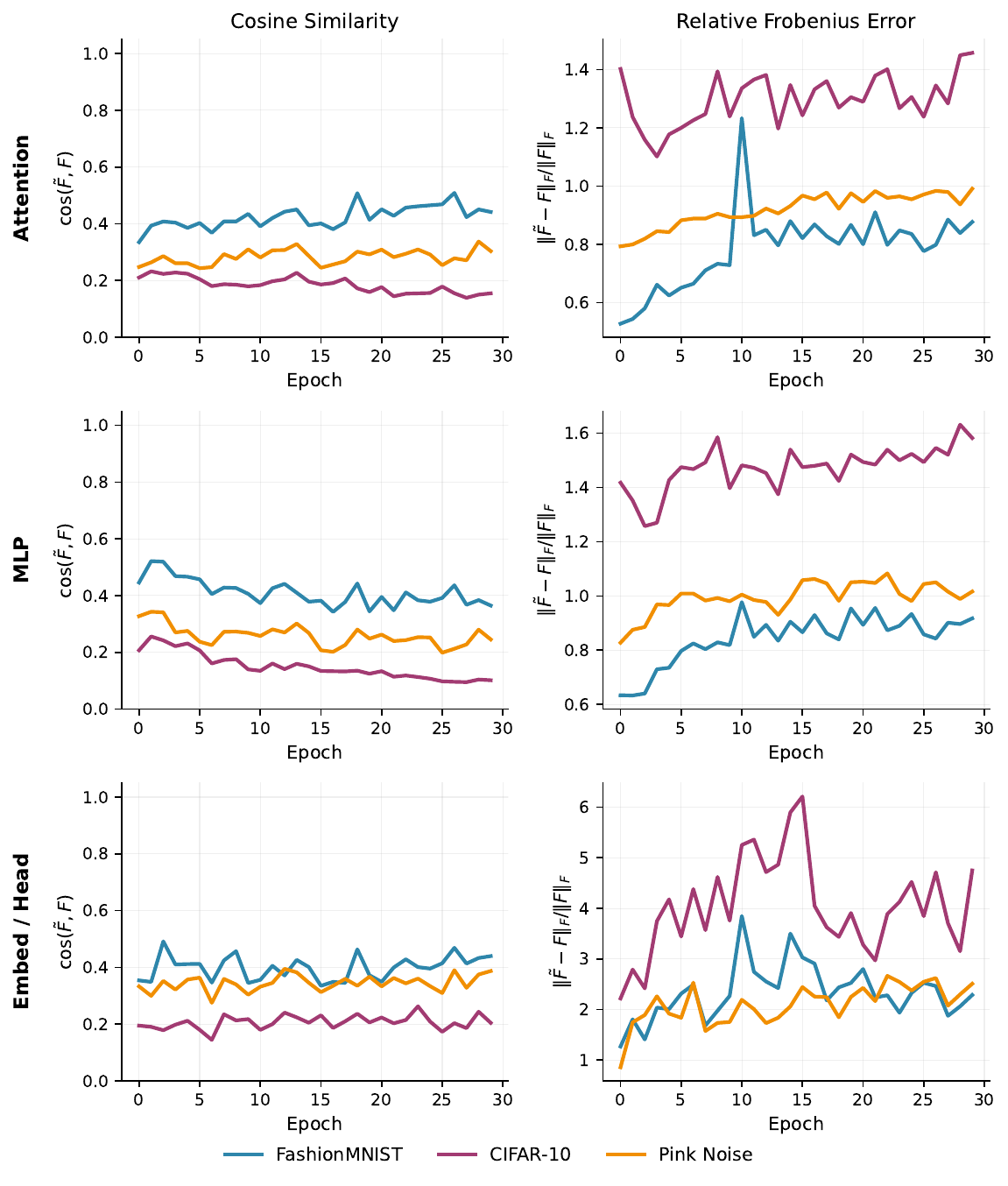}
      \caption{\textbf{Covariance Tracking in a Vision Transformer.} Cosine similarity (left) and relative Frobenius error (right) of the combined Fisher factor, averaged over layers
  within each group: Attention projections (top), MLP blocks (middle), and Embedding/Head (bottom). The source ranking is consistent with the CNN analysis
  (Fig.~\ref{fig:kfac_cov_track}): domain-matched FashionMNIST (blue) achieves the highest alignment, Synthetic DP-KFC (orange) provides a robust intermediate, and mismatched CIFAR-10
   (purple) performs worst. Unlike in the CNN, the scale advantage of synthetic noise is reduced by LayerNorm (see text).}
      \label{fig:cov_tracking_vit}
  \end{figure*}

  Fig.~\ref{fig:cov_tracking_vit} shows the evolution of Fisher factor alignment throughout training. The source ranking reproduces the CNN pattern (Fig.~\ref{fig:kfac_cov_track}):
  domain-matched FashionMNIST achieves the highest directional alignment, Synthetic DP-KFC provides a robust intermediate, and domain-mismatched CIFAR-10 performs worst on both
  metrics across all layer groups. Decomposing into activation and gradient factors confirms the same mechanism: gradient covariance alignment ($\cos_G$) is similar across all three
  sources (${\sim}0.4$--$0.5$), indicating that backward-pass geometry is architecture-dominated even in multi-head attention projections, while activation covariance ($\cos_A$)
  exhibits the largest gap between sources.

  Absolute alignment is lower than in the CNN ($\cos_F$ of 0.3--0.4 vs.\ 0.6--0.8 in early convolutional layers), reflecting the increased modeling challenge of global self-attention
  compared to local convolutions. The Embedding/Head group shows elevated Frobenius error for all methods, consistent with the depth-dependent degradation observed in fully-connected
  layers of the CNN.

  \textbf{LayerNorm and scale stability.}
  One notable difference from the CNN analysis is the relative Frobenius error. In the CNN, synthetic noise achieved strictly lower scale error than all public proxies, because
  mismatched natural images triggered feature-specific activation magnitudes that inflated $\|A_l\|_F$ (Section~\ref{app:cov_tracking}). In the ViT, the matched proxy (FashionMNIST)
  achieves comparable or slightly lower Frobenius error than synthetic noise. We attribute this to LayerNorm, which is applied before every attention and MLP block. Since
  $\mathrm{LN}(a) = \gamma \odot (a - \mu_a) / \sigma_a + \beta$, the pre-activation statistics are renormalized to zero mean and unit variance regardless of the input distribution.
  This constrains the Frobenius norm of the activation covariance: $\|A_l\|_F \approx \|\mathbb{E}[\hat{a}_l \hat{a}_l^\top]\|_F$ where $\hat{a}_l$ has bounded second moments by
  construction. The normalization suppresses the scale explosions that penalize mismatched public data in the CNN, compressing Frobenius errors across sources. As a result, the scale
  advantage that synthetic noise derives from phase-incoherent activations is diminished, because LayerNorm already prevents any single source from producing anomalous activation
  magnitudes. The directional ranking, however, is unaffected by this normalization and remains consistent with the CNN findings.

\clearpage
\section{Centralized Training Results}
\index{Centralized training results}
\label{app:table_centralized}

\subsection{Experimental Details}
We evaluate our method across four complexity regimes to demonstrate scalability and modality-agnostic generalization. All methods (DP-SGD, Public DP-KFC, Synthetic DP-KFC) share identical hyperparameters per task to ensure fair comparison. Privacy accounting uses Rényi Differential Privacy (RDP) with $\delta = 1/N$ where $N$ is the training set size.

\textbf{Optimizer Selection.} For Transformer architectures (CrossViT, BERT), we use AdamW \cite{loshchilov2018decoupled} as it consistently outperforms SGD on these models. This makes the DP-SGD baseline equivalent to DP-Adam , a stronger adaptive baseline than vanilla DP-SGD since Adam's diagonal second-moment estimates already capture some local geometry. For CNNs and linear models, we use SGD with momentum.

\textbf{(1) Simple Vision -- MNIST.} We train a 4-layer CNN from scratch with 2 convolutional layers (16 and 32 filters) followed by 2 fully-connected layers. Training uses 5 epochs, batch size 256, learning rate $10^{-3}$, and SGD with momentum 0.9. Public proxy data: FashionMNIST.

\textbf{(2) Complex Vision -- CIFAR-100.} We fine-tune CrossViT-Tiny-240 pretrained on ImageNet-1k with the backbone unfrozen for end-to-end training. A 2-layer MLP serves as the classification head. Training uses 5 epochs, batch size 256, learning rate $10^{-3}$, and AdamW optimizer. Public proxy data: CIFAR-10.

\textbf{(3) NLP -- StackOverflow.} We fine-tune \texttt{bert-base-uncased} on next-word prediction, training only the classifier head. Training uses 5 epochs, batch size 64, sequence length 128, learning rate $2{\times}10^{-4}$, and AdamW optimizer. Public proxy data: AG News.

\textbf{(4) NLP -- IMDB.} We train logistic regression on binary sentiment classification using TF-IDF features (2000 dimensions). Training uses 20 epochs, batch size 256, learning rate $10^{-1}$, and SGD optimizer. Public proxy data: AG News.

\begin{table*}[h!]
\centering
\footnotesize
\setlength{\tabcolsep}{4pt}
\caption{Test accuracy (\%). \underline{Underlined}: best (within 1 std). Syn.=Synthetic noise, Pub.=Public data preconditioner.}
\label{tab:results_comparison}
\begin{tabular}{ll c|c|cc}
\toprule
Dataset & Model & $\epsilon$ & Non-DP & DP-SGD & Syn. KFAC / Pub. KFAC \\
\midrule
StackOverflow & BERT & 0.5 & $\sim$99.0{\tiny$\pm$0.5} & 78.9{\tiny$\pm$7.3} & 81.3{\tiny$\pm$7.0} / \underline{89.8}{\tiny$\pm$4.2} \\
 & & 1 & & 89.5{\tiny$\pm$1.8} & 91.8{\tiny$\pm$1.9} / \underline{96.1}{\tiny$\pm$1.0} \\
 & & 2 & & 92.9{\tiny$\pm$0.7} & 95.4{\tiny$\pm$0.3} / \underline{97.5}{\tiny$\pm$0.3} \\
 & & 3 & & 93.6{\tiny$\pm$0.7} & 95.9{\tiny$\pm$0.3} / \underline{97.9}{\tiny$\pm$0.3} \\
 & & 5 & & 94.2{\tiny$\pm$0.7} & 96.4{\tiny$\pm$0.2} / \underline{98.1}{\tiny$\pm$0.3} \\
 & & 8 & & 94.6{\tiny$\pm$0.6} & 96.5{\tiny$\pm$0.1} / \underline{98.2}{\tiny$\pm$0.2} \\
 & & 10 & & 94.8{\tiny$\pm$0.6} & 96.6{\tiny$\pm$0.1} / \underline{98.3}{\tiny$\pm$0.2} \\
\midrule
MNIST & CNN & 0.5 & 98.8{\tiny$\pm$0.1} & 90.8{\tiny$\pm$0.2} & 93.3{\tiny$\pm$0.6} / \underline{94.7}{\tiny$\pm$0.4} \\
 & & 1 & & 91.7{\tiny$\pm$0.2} & 94.2{\tiny$\pm$0.5} / \underline{95.3}{\tiny$\pm$0.4} \\
 & & 2 & & 92.5{\tiny$\pm$0.3} & 95.0{\tiny$\pm$0.4} / \underline{95.7}{\tiny$\pm$0.3} \\
 & & 3 & & 92.9{\tiny$\pm$0.3} & 95.3{\tiny$\pm$0.4} / \underline{96.0}{\tiny$\pm$0.3} \\
 & & 5 & & 93.4{\tiny$\pm$0.3} & 95.7{\tiny$\pm$0.3} / \underline{96.2}{\tiny$\pm$0.3} \\
 & & 8 & & 93.7{\tiny$\pm$0.3} & 95.9{\tiny$\pm$0.3} / \underline{96.4}{\tiny$\pm$0.3} \\
 & & 10 & & 94.0{\tiny$\pm$0.3} & 96.0{\tiny$\pm$0.3} / \underline{96.5}{\tiny$\pm$0.2} \\
\midrule
CIFAR-100 & CrossViT & 0.5 & 68.3{\tiny$\pm$0.3} & 52.1{\tiny$\pm$0.5} & \underline{53.6}{\tiny$\pm$0.5} / \underline{53.1}{\tiny$\pm$0.5} \\
 & & 1 & & 56.2{\tiny$\pm$0.4} & \underline{57.6}{\tiny$\pm$0.5} / \underline{57.3}{\tiny$\pm$0.4} \\
 & & 2 & & 58.5{\tiny$\pm$0.3} & \underline{59.8}{\tiny$\pm$0.4} / \underline{59.6}{\tiny$\pm$0.5} \\
 & & 3 & & 59.5{\tiny$\pm$0.3} & \underline{60.9}{\tiny$\pm$0.3} / \underline{60.6}{\tiny$\pm$0.4} \\
 & & 5 & & 60.7{\tiny$\pm$0.3} & \underline{62.0}{\tiny$\pm$0.3} / \underline{61.6}{\tiny$\pm$0.4} \\
 & & 8 & & 61.4{\tiny$\pm$0.4} & \underline{62.7}{\tiny$\pm$0.4} / \underline{62.4}{\tiny$\pm$0.3} \\
 & & 10 & & 62.0{\tiny$\pm$0.4} & \underline{63.2}{\tiny$\pm$0.3} / \underline{62.9}{\tiny$\pm$0.3} \\
\midrule
IMDB & LogReg & 0.5 & $\sim$88.0{\tiny$\pm$0.5} & 82.1{\tiny$\pm$0.3} & \underline{83.5}{\tiny$\pm$0.1} / \underline{83.5}{\tiny$\pm$0.1} \\
 & & 1 & & 82.9{\tiny$\pm$0.1} & \underline{85.1}{\tiny$\pm$0.0} / \underline{85.1}{\tiny$\pm$0.1} \\
 & & 1.5 & & 83.1{\tiny$\pm$0.0} & \underline{85.5}{\tiny$\pm$0.1} / \underline{85.5}{\tiny$\pm$0.1} \\
 & & 2.8 & & 83.1{\tiny$\pm$0.1} & \underline{85.9}{\tiny$\pm$0.1} / \underline{85.8}{\tiny$\pm$0.1} \\
 & & 8 & & 83.2{\tiny$\pm$0.1} & \underline{86.0}{\tiny$\pm$0.0} / \underline{86.0}{\tiny$\pm$0.0} \\
\bottomrule
\end{tabular}
\end{table*}

\subsection{Comparison with Private Optimizer Baselines}
\label{app:baseline_compare}
Table~\ref{tab:baseline_compare} in the main text compares DP-KFC against representative private optimizers on MNIST (4-layer CNN); Table~\ref{tab:baseline_compare_full} below gives the same comparison over the full $\epsilon$ range. We group methods by where they act relative to the privacy mechanism. \textbf{Pre-privatization (scale-then-privatize, STP)} methods transform the gradient \emph{before} clipping and noise: AdaDPS \cite{Li2022} (diagonal preconditioner from side information) and our DP-KFC (block-diagonal KFAC preconditioner from synthetic noise). \textbf{Post-privatization} methods act on the already-privatized gradient: DP-AdamBC \cite{tang2024dpadambc} debiases Adam's second-moment estimate, and DiSK \cite{zhang2025disk} applies a simplified Kalman filter to denoise the privatized gradient sequence. We use the authors' official implementations, tune learning rate, clipping norm, and method-specific hyperparameters per method via grid search, and report mean $\pm$ std over $5$ seeds; privacy accounting is RDP with $\delta=1/N$ as elsewhere.

\begin{table}[h]
\centering
\caption{\textbf{Comparison with private optimizers on MNIST} (CNN, test accuracy \%, $5$ seeds, hyperparameters tuned per method). Full $\epsilon$ range; an abridged version is Table~\ref{tab:baseline_compare} in the main text. Top block: standalone optimizers (Public DP-KFC additionally requires a public proxy). Bottom: pre-privatization STP combined with post-privatization bias correction. Bold marks the best entry per column.}
\label{tab:baseline_compare_full}
\renewcommand{\arraystretch}{1.15}
\small
\setlength{\tabcolsep}{6pt}
\begin{tabular}{lcccc}
\toprule
\textbf{Method} & $\epsilon{=}1$ & $\epsilon{=}2$ & $\epsilon{=}5$ & $\epsilon{=}8$ \\
\midrule
DP-SGD & 91.7{\scriptsize$\pm$0.2} & 92.5{\scriptsize$\pm$0.3} & 93.4{\scriptsize$\pm$0.3} & 93.7{\scriptsize$\pm$0.3} \\
AdaDPS \cite{Li2022} & 91.3{\scriptsize$\pm$0.8} & 93.2{\scriptsize$\pm$1.0} & 93.6{\scriptsize$\pm$1.3} & 93.3{\scriptsize$\pm$1.4} \\
DiSK \cite{zhang2025disk} & 93.7{\scriptsize$\pm$0.4} & 94.1{\scriptsize$\pm$0.3} & 94.3{\scriptsize$\pm$0.2} & 94.3{\scriptsize$\pm$0.2} \\
DP-AdamBC \cite{tang2024dpadambc} & 94.0{\scriptsize$\pm$0.3} & 94.8{\scriptsize$\pm$0.2} & 95.2{\scriptsize$\pm$0.2} & 95.3{\scriptsize$\pm$0.1} \\
Public DP-KFC & 95.3{\scriptsize$\pm$0.4} & 95.7{\scriptsize$\pm$0.3} & 96.2{\scriptsize$\pm$0.3} & 96.4{\scriptsize$\pm$0.3} \\
\textbf{Synthetic DP-KFC} & 94.2{\scriptsize$\pm$0.5} & 95.0{\scriptsize$\pm$0.4} & 95.7{\scriptsize$\pm$0.3} & 95.9{\scriptsize$\pm$0.3} \\
\midrule
\textbf{Synthetic DP-KFC $+$ DP-AdamBC} & \textbf{95.5}{\scriptsize$\pm$0.3} & \textbf{96.1}{\scriptsize$\pm$0.2} & \textbf{96.4}{\scriptsize$\pm$0.3} & \textbf{96.4}{\scriptsize$\pm$0.3} \\
\bottomrule
\end{tabular}
\end{table}

Two observations follow. First, among methods that consume \emph{no auxiliary data}, Synthetic DP-KFC attains the best accuracy at every $\epsilon$, and DP-KFC with a (matched) public proxy improves further, consistent with the eigenspectrum analysis of Appendix~\ref{app:spectral_analysis}, since reshaping the gradient before clipping homogenizes per-parameter sensitivity rather than amplifying privacy noise (cf.\ Theorem~\ref{thm:convergence}). Second, because STP and post-privatization corrections target orthogonal failure modes (clipping/conditioning vs.\ second-moment bias and noise accumulation), they compose: \emph{Synthetic DP-KFC $+$ DP-AdamBC} dominates either method alone across all $\epsilon$. The trend on the main benchmarks points the same way, as DP-KFC already improves over the DP-Adam baseline on the CIFAR-100 (CrossViT) and StackOverflow (BERT) fine-tuning tasks (Table~\ref{tab:results_comparison}), so we expect the standalone ranking and the STP $+$ post-privatization complementarity to persist at larger scale; a full sweep of the post-privatization baselines on the fine-tuning benchmarks is a natural extension.

\clearpage

\section{Hyperparameter Sensitivity \& Ablation Study}
\label{app:sensitivity}
\index{Hyperparameter Sensitivity and Ablation Study}

  We conducted Bayesian hyperparameter optimization using Optuna's TPE sampler with 150 trials per method to ensure fair
  comparison between DP-SGD and DP-KFC. Each trial independently samples learning rate $\eta \in [10^{-3}, 0.5]$, clipping norm
  $C \in [0.1, 50]$, and for DP-KFC: damping $\pi \in [10^{-5}, 0.1]$, noise exponent $\alpha \in [0, 2]$, and preconditioner
  steps $\in [1, 100]$.

  \begin{figure*}[h]
      \centering
      \includegraphics[width=\textwidth]{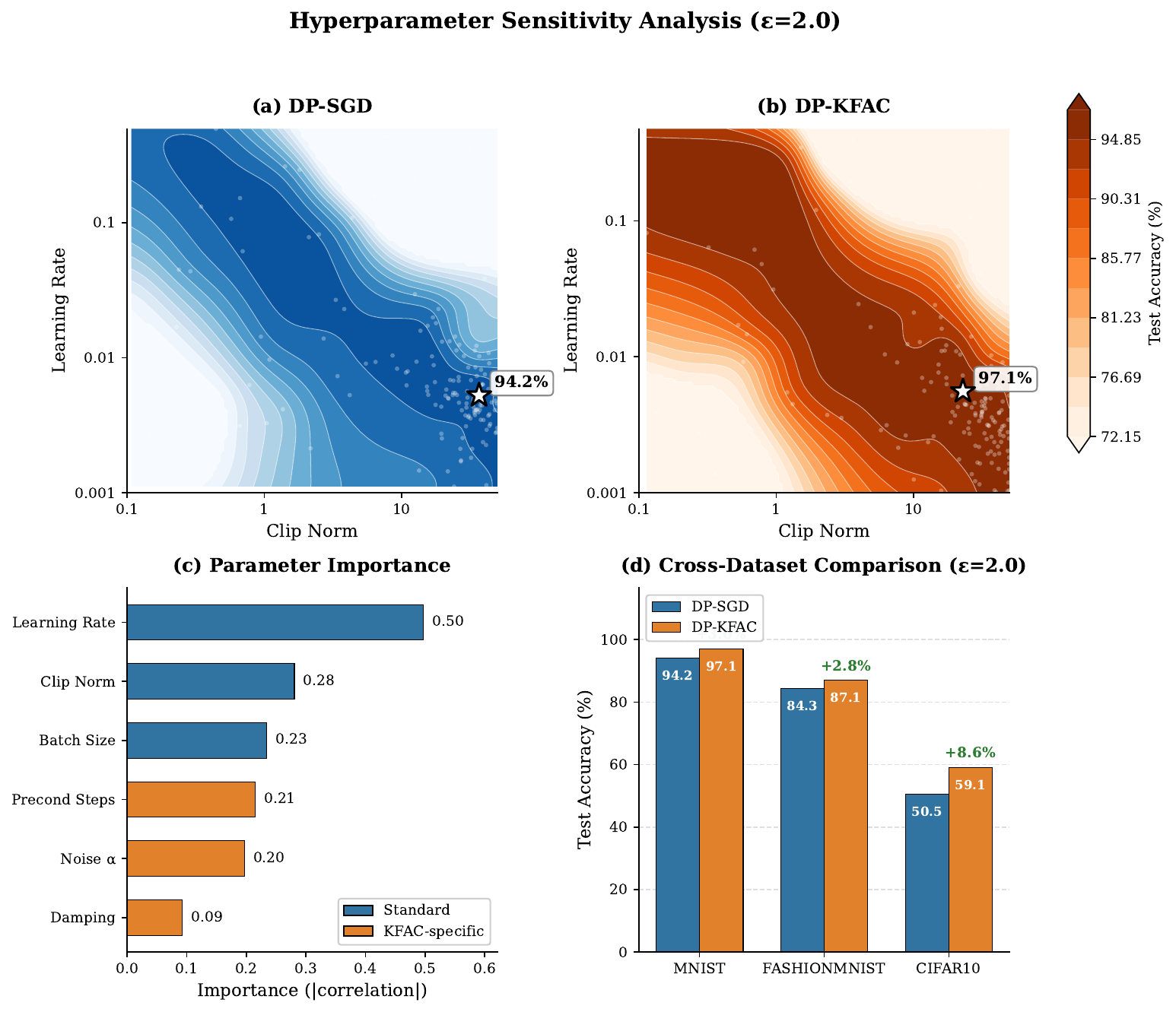}
      \caption{\textbf{Hyperparameter Optimization Analysis} (MNIST, $\epsilon=2.0$, 150 Optuna trials per method).
      \textbf{(a, b)} Clip norm $\times$ learning rate interaction surfaces. DP-SGD exhibits a narrow diagonal ridge of
  trainability, while DP-KFC creates a broad plateau spanning an order of magnitude in clip norm.
      \textbf{(c)} Parameter importance ranking. Standard optimization parameters (blue) dominate; KFAC-specific parameters
  (orange) have low importance, indicating minimal tuning burden.
      \textbf{(d, e)} DP-KFC parameter interactions show high-accuracy configurations (yellow) across wide ranges of damping and
   noise $\alpha$.
      \textbf{(f)} Summary: DP-KFC achieves +3.0\% improvement with noise $\alpha \approx 1.8$ (pink/brown noise), validating
  the spectral matching hypothesis.}
      \label{fig:optuna_ablation}
  \end{figure*}

  \subsection{The Decoupling Effect}
  \index{The Decoupling Effect}

  Panels (a) and (b) of Fig.~\ref{fig:optuna_ablation} reveal the central practical advantage of DP-KFC. In standard DP-SGD,
  the clipping norm $C$ and learning rate $\eta$ are tightly coupled: if $C$ is too small, gradients are highly biased, requiring a
  large $\eta$ to compensate; if $C$ is too large, the injected noise $\sigma^2 C^2$ explodes, requiring a tiny $\eta$. This
  creates a narrow diagonal relationship that makes hyperparameter tuning harder.

  DP-KFC fundamentally breaks this coupling. The preconditioning step normalizes gradient magnitudes \textit{before} clipping, creating a broad rectangular region of near-optimal performance. As shown in Panel
   (b), accuracy exceeds 90\% across clip norms spanning $[0.5, 50]$, a $100\times$ range, dramatically simplifying
  hyperparameter search.

  \subsection{Minimal Tuning Burden}
  \index{Minimal tunning burden}

  A common concern with second-order methods is that they introduce additional hyperparameters that require careful tuning. Panel
   (c) directly addresses this concern by ranking parameter importance based on correlation with final accuracy.

  Learning rate dominates (importance $= 0.50$), followed by clip norm ($0.28$) and batch size ($0.23$).  This demonstrates that DP-KFC does not introduce additional hyperparameter sensitivity compared to DP-SGD. The method-specific
   parameters can be set to reasonable defaults ($\pi = 10^{-3}$, $\alpha = 1.0$, steps $= 10$) with minimal performance
  degradation.

  \subsection{Spectral Validation}
  \index{Spectral Validation}

  Panel (e) provides empirical validation of our theoretical motivation. The optimal noise exponent $\alpha$ consistently
  clusters around $1.0$--$2.0$ across all datasets, corresponding to pink and brown noise. This confirms that synthetic inputs
  with $1/f$ spectral decay better approximate natural image statistics, yielding superior curvature estimates compared to white
  noise ($\alpha = 0$).

  \subsection{Cross-Dataset Consistency}

  Table~\ref{tab:optuna_summary} summarizes the Bayesian optimization results across three vision benchmarks. DP-KFC
  consistently outperforms DP-SGD, with gains ranging from +2.8\% to +8.6\%.

  \begin{table}[h]
  \centering
  \caption{\textbf{Bayesian Optimization Results} ($\epsilon=2.0$, 150 trials per method). DP-KFC consistently outperforms
  DP-SGD across all datasets, with the largest gains on the most challenging task (CIFAR-10).}
  \label{tab:optuna_summary}
  \begin{tabular}{lcccc}
  \toprule
  \textbf{Dataset} & \textbf{DP-SGD} & \textbf{DP-KFC} & $\mathbf{\Delta}$ & \textbf{Optimal $\alpha$} \\
  \midrule
  MNIST & 94.2\% & \textbf{97.1\%} & +2.9\% & 1.80 \\
  FashionMNIST & 84.3\% & \textbf{87.1\%} & +2.8\% & 1.48 \\
  CIFAR-10 & 50.5\% & \textbf{59.1\%} & +8.6\% & 1.78 \\
  \bottomrule
  \end{tabular}
  \end{table}

  Notably, the largest improvement occurs on CIFAR-10 (+8.6\%), the most challenging dataset with complex natural images. This
  suggests that DP-KFC's benefits are amplified when the optimization landscape is more difficult, precisely when better
  curvature information is most valuable.

  The optimal noise exponent $\alpha$ ranges from 1.48 to 1.80 across datasets, consistently in the pink-to-brown noise regime.
  This cross-dataset consistency further validates that matching the $1/f$ spectral statistics of natural images provides a
  robust architectural prior for curvature estimation.

  \subsection{Computational Overhead}
  \index{Computational Overhead}

  The matrix inversions required by DP-KFC introduce a $2.2\times$ slowdown in sample throughput compared to DP-SGD. However, in
   differential privacy, the optimization horizon is strictly bounded by the privacy budget $\epsilon$, not wall-clock time.
  Therefore, investing additional computation per step to maximize the utility of each privacy-consuming gradient query is the
  theoretically optimal strategy.

  Moreover, the preconditioner computation is embarrassingly parallel and can be amortized: Panel (c) shows that
  \texttt{precond\_steps} has low importance ($0.21$), and our ablations confirm that 10--20 synthetic batches suffice for robust
   curvature estimation. When the update frequency is reduced (e.g., recomputing every 5 epochs instead of every epoch), the
  overhead becomes negligible.

  \subsection{Parameter Interaction}
  \label{app:interactions}
  \index{Parameter Interaction}

  To understand how DP-KFC's hyperparameters interact, we visualize pairwise relationships across all 150 Optuna trials
  (Fig.~\ref{fig:interactions}). This analysis reveals which parameters require careful tuning and which can be safely set to
  defaults.

  \begin{figure*}[h]
      \centering
      \includegraphics[width=\textwidth]{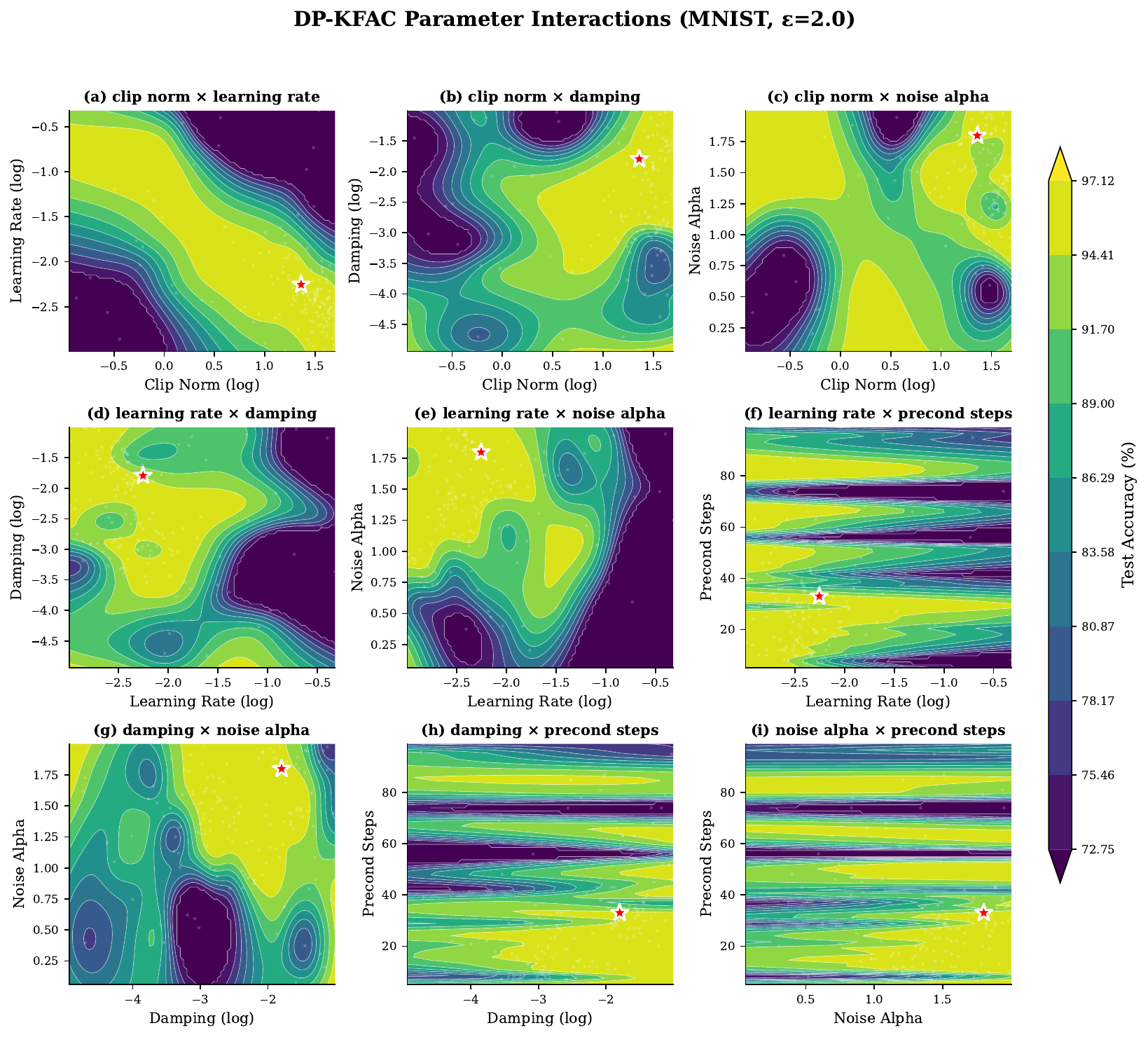}
      \caption{\textbf{DP-KFC Parameter Interactions} (MNIST, $\epsilon=2.0$). Each panel shows the accuracy landscape as a
  function of two hyperparameters. Yellow indicates high accuracy ($>95\%$), purple indicates low accuracy. Stars mark the best
  configuration found.}
      \label{fig:interactions}
  \end{figure*}

  \subsubsection{The Clip-Learning Rate Coupling}

  Panel (a) reveals that clipping norm and learning rate exhibit a diagonal coupling: larger clip norms require smaller learning
  rates to maintain optimal performance. This is expected since the effective step size scales with both parameters. However,
  compared to DP-SGD (Fig.~\ref{fig:optuna_ablation}a), this diagonal band is substantially \textit{wider}, indicating that
  DP-KFC's gradient homogenization reduces sensitivity to this interaction.

  \subsubsection{Damping: A Non-Critical Parameter}

  Panels (b), (d), and (h) consistently show that damping $\pi$ has minimal impact on performance. In panel (b), the yellow
  region extends vertically across three orders of magnitude ($10^{-4}$ to $10^{-1}$) for any reasonable clip norm. Panel (d)
  shows horizontal bands indicating that optimal learning rate is independent of damping choice. This contradicts the common
  belief that second-order methods require careful damping tuning. In practice, $\pi = 10^{-3}$ works well across all our
  experiments.

  \subsubsection{Spectral Matching Validation}

  Panels (c), (e), (g), and (i) provide empirical validation of our spectral matching hypothesis. In each panel, the yellow
  high-accuracy region concentrates at noise exponent $\alpha > 1.0$, corresponding to pink and brown noise. Panel (c) is
  particularly striking: regardless of clip norm, $\alpha \approx 1.5$ consistently outperforms white noise ($\alpha = 0$) by
  3--5\% accuracy. This confirms that synthetic inputs with $1/f$ spectral decay better approximate natural image statistics,
  yielding superior curvature estimates without access to real data.

  \subsubsection{Preconditioner Steps: Minimal Overhead}

  Panels (f), (h), and (i) demonstrate that the number of preconditioner estimation steps has negligible impact on final
  accuracy. The horizontal banding pattern in panel (f) shows that accuracy depends only on learning rate, not on whether 10 or
  100 synthetic batches were used for curvature estimation. This has important practical implications: the architectural
  curvature can be reliably estimated with minimal computational overhead, typically saturating after 10--20 batches.

\end{document}